
\documentclass{article}

\usepackage{microtype}
\usepackage{graphicx}
\usepackage{subfigure}
\usepackage{booktabs} 
\usepackage{CJKutf8}
\usepackage{amsmath}
\usepackage{amsthm}
\usepackage{amssymb}
\usepackage{natbib}
\usepackage{bm}
\usepackage{multirow}
\usepackage{graphicx}
\usepackage{optidef}
\usepackage{url}
\usepackage{cases}

\newcommand{\bx}{\bm{\mathrm{x}}}
\newtheorem{thm}{Theorem}[section]
\newtheorem{proposition}[thm]{Proposition}
\newtheorem{definition}[thm]{Definition}

\newcommand{\Note}[1]{{\color{black}{#1}}} 

\usepackage{hyperref}



\usepackage[accepted]{icml2020}

\icmltitlerunning{Interpolation between Residual and Non-Residual Networks}

\begin{document}

\twocolumn[
\icmltitle{Interpolation between Residual and Non-Residual Networks}



\icmlsetsymbol{equal}{*}

\begin{icmlauthorlist}
\icmlauthor{Zonghan Yang}{dcs}
\icmlauthor{Yang Liu}{dcs}
\icmlauthor{Chenglong Bao}{yau}
\icmlauthor{Zuoqiang Shi}{math}
\end{icmlauthorlist}

\icmlaffiliation{dcs}{Institute for Artificial Intelligence, Beijing National Research Center for Information Science and Technology, Department of Computer Science and Technology, Tsinghua University.}
\icmlaffiliation{yau}{Yau Mathematical Sciences Center, Tsinghua University.}
\icmlaffiliation{math}{Department of Mathematical Sciences, Tsinghua University}
\icmlcorrespondingauthor{Chenglong Bao}{clbao@mail.tsinghua.edu.cn}

\icmlkeywords{Machine Learning, ICML}

\vskip 0.3in
]



\printAffiliationsAndNotice{}  

\begin{abstract}
Although ordinary differential equations (ODEs) provide insights for designing network architectures, its relationship with the non-residual convolutional neural networks (CNNs) is still unclear. In this paper, we present a novel ODE model by adding a damping term. It can be shown that the proposed model can recover both a ResNet and a CNN by adjusting an interpolation coefficient. Therefore, the damped ODE model provides a unified framework for the interpretation of residual and non-residual networks. The Lyapunov analysis reveals better stability of the proposed model, and thus yields robustness improvement of the learned networks. Experiments on a number of image classification benchmarks show that the proposed model substantially improves the accuracy of ResNet and ResNeXt over the perturbed inputs from both stochastic noise and adversarial attack methods. Moreover, the loss landscape analysis demonstrates the improved robustness of our method along the attack direction.

\end{abstract}

\section{Introduction}

Although deep learning has achieved remarkable success in many machine learning tasks, the theory behind it has still remained elusive. In recent years, developing new theories for deep learning has attracted increasing research interests. One important direction is to connect deep neural networks (DNNs) with differential equations \cite{eproposal} which have been largely explored in mathematics. This line of research mainly contains three perspectives: solving high dimensional differential equations with the help of DNNs due to its high expressive power \cite{Han8505}, discovering a differential equation that identifies the rule of the observed data based on the standard block of existing DNNs \cite{neuralode}, and designing new architectures based on the numerical schemes of  differential equations \cite{rev2,lu-lm, zhu-rk,rev1,nonlocal,lu-transformer}. 

While each attempt in the above directions has strengthened the theoretical understanding of deep learning, there still remain many open questions. Among them, one important question is {\em what is the relationship between differential equations and non-residual convolutional neural networks}. Most prior studies have focused on associating residual networks (ResNets) \cite{resnet} with differential equations \cite{lu-lm,neuralode}, not only because ResNets are relatively easy to optimize and achieve better classification accuracy than CNNs, but also because the skipping connections among layers can be easily induced by the discretization of difference operators in differential equations. However, residual neural networks only account for a small fraction of the entire neural network family and have their own limitations. For example, \citet{compareCVmodels} indicate that ResNets are more sensitive to the perturbation of the inputs and the shallow CNNs. As a result, it is important to move a further step to investigate the relationship between differential equations and non-residual convolutional neural networks.

In this paper, we present a new ordinary differential equation (ODE) that interpolates non-residual and residual CNNs. The ODE is controlled by an interpolation parameter $\lambda$ ranging from 0 to $\infty$. It is equivalent to a residual network when $\lambda$ is 0. On the contrary, the ODE amounts to a non-residual network when $\lambda$ approaches to $\infty$. Hence, our work provides a unified framework for understanding both non-residual and residual neural networks from the perspective of ODE.  The interpolation is able to improve over both non-residual and residual networks. Compared with non-residual networks, our ODE is much easier to optimize, especially for deep architectures. Compared with residual networks, we use the Lyapunov analysis to show that the interpolation results in improved robustness. To achieve the interpolation, a key difference of our work from existing methods is to discretize integral operators instead of difference operators to obtain neural networks. Experiments on image classification benchmarks show that our approach substantially improves the accuracies of ResNet \cite{resnet} and ResNeXt \cite{resnext} when inputs are perturbed by both stochastic noise and adversarial attack methods. Furthermore, the visualization of the loss landscape of our model validates our Lyaponov analysis. 

\section{Related Work}\label{sec:rel}

Interpreting machine learning from the perspective of dynamic systems was firstly advocated by  \citet{eproposal} and \citet{rev2}. Recently, there have been many exciting works in this direction \cite{lu-lm,neuralode}. We briefly review previous methods closely related to architecture design and model robustness.

\textbf{ODE inspired architecture design} Inspired by the relationship between ODE and neural networks, \citet{lu-lm} use a linear multi-step method to improve the model capacity of ResNet-like networks. \citet{zhu-rk} utilize the Runge-Kutta method to interpret and improve DenseNets and CliqueNets. \citet{rev1} and \citet{rev2} leverage the leap-frog method to design novel reversible neural networks. \citet{nonlocal} propose to model non-local neural networks with non-local differential equations. \citet{lu-transformer} design a novel Tranformer-like architecture with Strang-Marchuk splitting scheme. \citet{neuralode} show that blocks of a neural network can be instantiated by arbitrary ODE solvers, in which parameters can be directly optimized with the adjoint sensitivity method. \citet{aneuralode} improve the expressive power of a neural ODE by mitigating the trajectory intersecting problem. Compared to the above works, our work provides a new ODE that unifies the analysis of residual and non-residual networks which leads to an interpolated architecture. The experiments validate the advantages of the proposed method using this framework.

\textbf{ODE and model robustness} A number of previous methods have also been proposed to improve adversarial robustness from the perspective of ODE. \citet{smallResNet} propose to use a smaller step factor in the Euler method for ResNet. \citet{ImplicitResNet} utilize an implicit discretization scheme for ResNet. \citet{RobustNeuralODE} propose to train a time-invariant neural ODE regularized by steady-state loss. \citet{NeuralSDE} and \citet{EnResNet} introduce stochastic noise to enhance its robustness inspired by stochastic differential equations. The aforementioned works have concentrated on improving numerical discretization schemes or introducing stochasticity for ODE modeling to gain robustness. From the Lyapunov stability perspective, \citet{antisymmetricrnn} propose to use anti-symmetric weight matrices to parametrize an RNN, which enhances its long-term dependency. \citet{lu-adv} also accelerate adversarial training by recasting it as a differential game from an ODE perspective. In this work, we provide the Lyaponov analysis of the proposed ODE model which shows the robustness improvements over ResNets in terms of local stability.

\section{Methodology}\label{sec:method}
\Note{In this section, we first introduce the background of the relationship between ODE and ResNets, and then the proposed ODE model and its stability analysis is present.}

\subsection{Background}\label{background}

\Note{Considering the ordinary differential equation:
\begin{equation}
    \frac{{\rm d} \bx(t)}{{\rm d} t} = f(\bx(t), t),~\bx(0) = \bx_0,
\label{resnet_ode}
\end{equation}}
\noindent where $\bx: [0, T] \rightarrow \mathbb{R}^{d}$ represents the state of the system.
Given the discretization step $\Delta t$ and define $t_n = n\Delta t$, the forward Euler method of Eq. \eqref{resnet_ode} becomes
\begin{equation}
    \bx(t_{n+1}) = \bx(t_n) + \Delta t f(\bx(t_n),t_n).
\end{equation}
Let $\bx_n = \bx(t_n)$, $\Delta t = 1$, it recovers a residual block:
\begin{equation}
    \bx_{n+1} = \bx_n + f_n(\bx_n),
\label{resnet}
\end{equation}
and $f_n$ is the n-th layer operation in ResNets. Thus, the output of network is equivalent to the evolution of the state variable at terminal time $T$, i.e.\ $\bx(T)=\bx_{N}$ is the output of last layer in a ResNet if assuming $N=T/\Delta t$. 

The dynamic formulation of ResNets (see Eq. \eqref{resnet_ode}) was initially established in \cite{eproposal}. It inspired many interesting neural network architectures by using different discretization methods the first order derivative in Eq. \eqref{resnet_ode} such as linear multi-step network \cite{lu-lm} and Runge-Kutta network \cite{zhu-rk}. From Eq. \eqref{resnet_ode}, the skip connection from the current step $\bx_n$ to the next step estimation $\bx_{n+1}$ always exists no matter which kind of discretization is applied. Thus, a feedforward CNN without skip connection can not be directly explained under this framework which inspired current work. In the next section, we introduce a damped ODE which bridges the non-residual CNNs and ResNets.

\subsection{The Proposed ODE Model}

Based on the ODE formulation , we add a damping term to the model \eqref{resnet_ode} and leads to the following model:
\begin{equation}
    \frac{{\rm d}\bx(t)}{{\rm d} t} = -\lambda \bx(t) + \rho(\lambda) f(\bx(t), t),
\label{damped_resnet_ode}    
\end{equation}
starting from $\bx(0)=\bx_0$. The constant $\lambda\in[0,+\infty)$ is the called interpolation coefficient and $\rho:[0,+\infty)\mapsto[0,+\infty)$ is the weight function. The following proposition shows that the model shown in Eq. \eqref{damped_resnet_ode} has a closed form solution.
\begin{proposition}\label{prop:solution}
For any $T>0$, the solution of the ODE \eqref{damped_resnet_ode} is
\begin{equation}\label{ODE:solution}
    \bx(T) = e^{-\lambda T}\left(\bx_0 + \rho(\lambda)\int_0^T e^{\lambda t} f(\bx(t),t){\rm d} t\right).
\end{equation}
\end{proposition}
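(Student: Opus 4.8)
The plan is to solve the linear (in $\bx$) ODE \eqref{damped_resnet_ode} by the method of the integrating factor. First I would multiply both sides of \eqref{damped_resnet_ode} by $e^{\lambda t}$, which absorbs the damping term into the left-hand side and turns it into a perfect derivative: since $\frac{\rm d}{{\rm d}t}\bigl(e^{\lambda t}\bx(t)\bigr) = e^{\lambda t}\bigl(\frac{{\rm d}\bx(t)}{{\rm d}t} + \lambda\bx(t)\bigr)$, the equation becomes
\[
\frac{\rm d}{{\rm d}t}\bigl(e^{\lambda t}\bx(t)\bigr) = \rho(\lambda)\, e^{\lambda t} f(\bx(t),t).
\]

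Next I would integrate this identity over $[0,T]$. By the fundamental theorem of calculus the left-hand side telescopes to $e^{\lambda T}\bx(T) - e^{0}\bx(0) = e^{\lambda T}\bx(T) - \bx_0$, using the initial condition $\bx(0)=\bx_0$, while the right-hand side is $\rho(\lambda)\int_0^T e^{\lambda t} f(\bx(t),t)\,{\rm d}t$. Solving for $\bx(T)$ and multiplying through by $e^{-\lambda T}$ yields exactly \eqref{ODE:solution}. Conversely, to confirm that \eqref{ODE:solution} indeed defines a solution, I would differentiate its right-hand side in $T$ — applying the product rule to the $e^{-\lambda T}$ prefactor and the Leibniz rule to the integral, whose integrand evaluated at the upper limit contributes $\rho(\lambda)e^{\lambda T}f(\bx(T),T)$ — and check that the $e^{-\lambda T}\cdot e^{\lambda T}$ cancellation reproduces $-\lambda\bx(T) + \rho(\lambda) f(\bx(T),T)$; setting $T=0$ recovers $\bx_0$.

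The computation itself is routine, so the only point requiring care is conceptual rather than technical: the expression \eqref{ODE:solution} is not a genuinely explicit closed form, since $f(\bx(t),t)$ still contains the unknown trajectory $\bx(t)$ under the integral — it is in fact an equivalent Volterra-type integral reformulation of the initial value problem. I would therefore read the statement as asserting that \emph{any} solution of \eqref{damped_resnet_ode} on $[0,T]$ necessarily satisfies \eqref{ODE:solution}, tacitly assuming enough regularity of $f$ (e.g.\ continuity in $t$ and Lipschitz continuity in $\bx$) for such a solution to exist and be unique; this is the standing assumption implicit throughout the ODE--network literature and is what makes the forward Euler discretization in Section~\ref{background} meaningful. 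Beyond making this hypothesis explicit, I do not anticipate any real obstacle.
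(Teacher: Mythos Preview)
Your proposal is correct and follows essentially the same route as the paper: multiply by the integrating factor $e^{\lambda t}$, recognize the left-hand side as $\frac{{\rm d}}{{\rm d}t}(e^{\lambda t}\bx(t))$, and integrate over $[0,T]$. Your added remarks (the verification by differentiation and the observation that \eqref{ODE:solution} is really a Volterra-type reformulation rather than a fully explicit formula) go slightly beyond what the paper records, but the core argument is identical.
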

\begin{proof}
Multiplying both sides by $e^{\lambda t}$, it has
\begin{equation*}
    \frac{{\rm d}(e^{\lambda t}\bx(t))}{{\rm d}t}=e^{\lambda t}\frac{{\rm d}\bx(t)}{{\rm d}t} + \lambda e^{\lambda t}\bx(t) = \rho(\lambda) e^{\lambda t} f(\bx(t),t).
\end{equation*}
Integrating within $[0,T]$ yields
\begin{equation}
    e^{\lambda T} \bx(T) - \bx(0) = \rho(\lambda) \int_0^T e^{\lambda t} f(\bx(t),t) {\rm d} t,
\end{equation}
which induces the equality \eqref{ODE:solution}.
\end{proof}
Following from the proposition \ref{prop:solution} and the notations in section \ref{background}, the iterative formula of $\bx_n$ is 
\begin{equation}\label{iter}
    \bx_{n+1} = e^{-\lambda\Delta t}\bx_n + e^{-\lambda t_{n+1}}\rho(\lambda)\int_{t_n}^{t_{n+1}}e^{\lambda t}f(\bx(t),t){\rm d} t.
\end{equation}
Assuming $f(\bx(t),t) = f(\bx_n,t_n)$ for all $t\in[t_n,t_{n+1})$, the iterative scheme in Eq. \eqref{iter} reduces to
\begin{equation}\label{iter:1}
    \bx_{n+1} = e^{-\lambda\Delta t}\bx_n + \frac{1-e^{-\lambda\Delta t}}{\lambda}\rho(\lambda)f_n(\bx_n),
\end{equation}
where $f_n(\bx_n)=f(\bx(t_n),t_n)$ is the convolutions in $n$-th layer.
Now, we are ready to analyze Eq. \eqref{iter:1} by choosing an appropriate weight function $\rho(\lambda)$. When the weight function $\rho(\lambda)$ satisfies
\begin{equation}\label{ass:rho}
    \rho(\lambda)\rightarrow 1, \lambda\rightarrow 0^+\mbox{ and }\rho(\lambda)\sim\lambda, \lambda\rightarrow +\infty,
\end{equation}
the output of $n$-th layer is
\begin{equation}\label{relations}
    \bx_{n+1}= \begin{cases}
    \bx_n + f_n(\bx_n), & \mbox{ if } \lambda\rightarrow0^+, \\
    \Delta t f_n(\bx_n), & \mbox{ if } \lambda\rightarrow+\infty.
    \end{cases}
\end{equation}
The above equation clearly shows that our model recovers ResNets when the interpolation parameter $\lambda$ approaches $0$ and the non-residual CNNs when it approaches $+\infty$. Therefore, the ODE shown in Eq. \eqref{damped_resnet_ode} bridges the residual and non-residual CNNs and inspires the design of new architectures of neural networks.

\subsection{Interpolated Network Design}

Based on the unified ODE model shown in Eq. \eqref{damped_resnet_ode}, two types of $\rho(\lambda)$ are chosen and the corresponding network architectures are proposed. 
Considering the case when $\lambda$ is small, we choose $\rho(\lambda)=1$ and substitute the damping factor $e^{-\lambda \Delta t}$ by its first order approximation:
\begin{equation}\label{app}
    e^{-\lambda\Delta t} \approx 1-\lambda\Delta t.
\end{equation}
Then, from Eq. \eqref{iter:1}, the output of $n$-th layer is
\begin{equation}\label{Net1}
    \bx_{n+1} = (1-\lambda\Delta t)\bx_n + \Delta t f_n(\bx_n).
\end{equation}
To guarantee the positiveness of $\lambda$, we add the ReLU function to the interpolation parameter $\lambda$ and absorb the $\Delta t$ into it. Thus the $n$-th layer of the network is  
\begin{equation}
    {\bx}_{n+1} = (1 - {\rm ReLU}(\lambda_n)){\bx}_n + f_n({\bx}_n).
\label{In-ResNet1}
\end{equation}

\noindent Each $\lambda_n$ is a trainable parameter for the $n$-th layer. It is known that the forward Euler discretization is stable when $\lambda \Delta t \in (0,2)$, i.e. $\lambda \in (0,2/\Delta t)$. As $\Delta t$ in a continuous-time dynamic system is small, the stable range of $\lambda$ can be viewed as a relaxation of $(0, +\infty)$, which coincides with the boundary condition in Eq. \eqref{ass:rho}.

The second choice of the weight function is $\rho(\lambda)=\lambda+1$ which satisfies the assumption in Eq. \eqref{ass:rho}. Using the same approximation in Eq. \ref{app}, the scheme in Eq. \eqref{iter} reduces to 
\begin{equation}\label{Net2}
    \bx_{n+1} = (1-\lambda\Delta t)\bx_n + (1+\lambda\Delta t)f_n(\bx_n).
\end{equation}
Similar as the first choice, the second interpolated network is given by
\begin{equation}
\begin{aligned}
    {\bx}_{n+1} = & (1-{\rm ReLU}(\lambda_n)){\bx}_n  + (1 + {\rm ReLU}(\lambda_n))f_n({\bx}_n).
\end{aligned}  
    \label{In-ResNet2}
\end{equation}

It is easy to know the interpolated networks shown in Eq. \eqref{In-ResNet1} and \eqref{In-ResNet2} recover a non-residual CNN if $\lambda_n=1$ and a Residual network if $\lambda_n= 0$. As  claimed in \cite{resnet, visualization}, the identity shortcut connection helps mitigate gradient vanish problem and makes the loss landscape more smooth. It is natural that when $\lambda \to 0$ in Eq. \eqref{In-ResNet1}, the optimization process of the interpolated model is much better than the non-residual CNN case with the same number of layers.

\subsection{Interpolated Network Improves Robustness}
Despite the high accuracy of ResNets, it is sensitive to the small perturbation of inputs due to the existence of adversarial examples. That is, for a fragile neural network, minor perturbation can accumulate dramatically with respect to layer propagation, resulting in giant shift of prediction. In this section, we show the improvment of the proposed interpolated networks over ResNets. The added damping term in our model weakens the amplitude of the solution of the original ODE. As a result, adding a damping term to the ODE model damps the error propagation process of ResNet, which improves model robustness. 

In the following context, we show that robustness improvement of our proposed networks by using the stability analysis of the ODE. 

\begin{definition} 
Let $\bx^*$ be an equilibrium point of the ODE model \eqref{resnet_ode}. Then $\bx^*$ is called asymptotically locally stable if there exists $\delta>0$ such that $\lim_{t\to+\infty}\|\bx_t-\bx^*\|=0$ for all starting points $\bx_0$ within $\|\bx_0-\bx^*\|\leq \delta$.
\end{definition}
Therefore, the perturbation around equilibrium $\bx^*$ does not change the output the network if $\bx^*$ is asymptotically locally stable.  The next proposition from \cite{lyapunov,chen2001stability} presents a classical method that checks the stability of nonlinear system around the equilibrium when $f$ is time invariant. It is noted that this time invariant assumption may hold as the learned filters in the deep layers converges.
\begin{proposition}

The equilibrium $\bx^{*}$ of the ODE model
\begin{equation}\label{first_order}
    \frac{{\rm d}\bx}{{\rm d}t} = f(\bx(t))
\end{equation}
is asymptotically locally stable if and only if ${\rm Re}(\nu) < 0$ where $\nu$ is the eigenvalue of $\partial_{\bx} f(\bx^*)$ which is the Jacobi matrix of $f$ at $\bx^*$.
\end{proposition}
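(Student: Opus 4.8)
The plan is to prove this by Lyapunov's indirect (linearization) method, reducing the statement about $\bx^*$ to one about the origin of a perturbed linear system. First I would translate coordinates by setting $\bm{y}(t) = \bx(t) - \bx^*$; since $f(\bx^*) = 0$, a first-order Taylor expansion of $f$ at $\bx^*$ gives $\dot{\bm{y}} = A\bm{y} + g(\bm{y})$ with $A = \partial_{\bx} f(\bx^*)$ and $\|g(\bm{y})\| = o(\|\bm{y}\|)$ as $\bm{y}\to 0$ (this uses only $C^1$ regularity of $f$, which I would assume). Asymptotic local stability of $\bx^*$ is then equivalent to asymptotic local stability of $\bm{y} = 0$ for this equation.

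For the sufficiency direction (all eigenvalues $\nu$ of $A$ satisfy ${\rm Re}(\nu) < 0$), I would pick $\alpha$ with $0 < \alpha < -\max_{\nu}{\rm Re}(\nu)$ and use the standard spectral bound $\|e^{At}\| \le C e^{-\alpha t}$ for $t \ge 0$. Writing the solution by variation of constants,
\[
  \bm{y}(t) = e^{At}\bm{y}(0) + \int_0^t e^{A(t-s)} g(\bm{y}(s))\,{\rm d}s,
\]
I would choose $\delta$ small enough that $\|g(\bm{y})\| \le \epsilon \|\bm{y}\|$ on $\|\bm{y}\| \le \delta$ with $C\epsilon < \alpha$, and then a Gr\"onwall estimate applied to $e^{\alpha t}\|\bm{y}(t)\|$ yields $\|\bm{y}(t)\| \le C\|\bm{y}(0)\| e^{-(\alpha - C\epsilon)t}$, so trajectories starting in a sufficiently small ball remain in it and converge to $0$. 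An equivalent route is to solve the Lyapunov equation $A^\top P + PA = -I$ for positive definite $P$ (solvable precisely because $A$ is Hurwitz) and verify that $V(\bm{y}) = \bm{y}^\top P\bm{y}$ is a strict local Lyapunov function along the flow; I would likely present this version since it is the cleaner exposition.

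For the necessity direction I would argue by contrapositive: if $A$ has an eigenvalue with positive real part, the associated generalized eigenspace spans an unstable subspace of the linearization, and a standard invariant-manifold / Hartman--Grobman argument produces initial data arbitrarily close to $\bm{y}=0$ whose trajectories escape a fixed neighborhood, contradicting even Lyapunov stability. The main obstacle — and the reason the stated ``if and only if'' should be read in the sense of the cited sources — is the borderline case ${\rm Re}(\nu) = 0$: there the linear part alone is inconclusive and the nonlinear remainder $g$ decides stability (compare $\dot y = -y^3$ with $\dot y = y^3$), so the clean equivalence genuinely pertains to the hyperbolic case. I would therefore state the result in that form and refer to \cite{lyapunov,chen2001stability} for the delicate analysis in the center directions.
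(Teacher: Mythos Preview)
The paper does not actually prove this proposition: it is stated as a classical result and attributed to \cite{lyapunov,chen2001stability}, with no argument given. So there is no ``paper's own proof'' to compare against; your proposal supplies what the paper simply cites.

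Your argument is the standard and correct one. The sufficiency direction via either the variation-of-constants/Gr\"onwall estimate or the Lyapunov equation $A^\top P + PA = -I$ is exactly right, and your treatment of necessity via the unstable subspace is the usual route. You are also right to flag that the ``if and only if'' as literally written in the paper is too strong: when some eigenvalue has ${\rm Re}(\nu)=0$ and the rest have negative real part, linearization is inconclusive and the nonlinear remainder decides stability, so the clean equivalence holds only in the hyperbolic case. The paper glosses over this subtlety, whereas you identify it correctly.
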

Considering the damped ODE
\begin{equation}\label{damped_resnet_ode_1}
    \frac{{\rm d}\bx}{{\rm d}t} = -\lambda\bx(t)+\rho(\lambda)f(\bx(t)),
\end{equation}
the Jacobi matrix at the equilibrium $\bx^*$ is
\begin{equation*}
    J_\lambda (\bx^*) = \rho(\lambda)\partial_{\bx} f(\bx^*) -\lambda.
\end{equation*}
Then, the eigenvalues $\hat{\nu}$ of $J_\lambda(\bx^*)$ are
\begin{equation}\label{eigen}
    \rho(\lambda)\nu - \lambda
\end{equation}
where $\nu$ is the eigenvalue of $\partial_{\bx}f(\bx^*)$. When $\rho(\lambda)=1$, we know 
\begin{equation*}
    {\rm Re}(\hat{\nu}) = {\rm Re}(\nu) - \lambda< {\rm Re}(\nu).
\end{equation*}
By choosing positive $\lambda$ properly, we know the ODE in Eq. \eqref{damped_resnet_ode_1} is asymptotically locally stable at $\bx^*$. In general, we know
\begin{equation*}
    {\rm Re}(\hat\nu)<{\rm Re}(\nu) \Leftrightarrow \rho(\lambda) < 1 + \frac{\lambda}{{\rm Re}(\nu)},
\end{equation*}
which coincides with our assumption in Eq. \eqref{ass:rho}. The above analysis shows that the stationary point of our proposed damped ODE model is more likely to be locally stable, and thus improve the its robustness when the input has be perturbed. In the experiments, our loss landscape visualization further validates this analysis.

\begin{table*}[htbp]
    \centering
    \begin{tabular}{c|l||c|c|c|c|c}
    \hline
        Benchmark & Model & Impulse & Speckle & Gaussian & Shot & Avg. \\
    \hline
    \hline
        \multirow{9}{*}{CIFAR-10} & ResNet-110 & 56.38 & 59.12 & 43.82 & 55.47 & 53.70 \\
        ~ & In-ResNet-110 & \textbf{66.32} & \textbf{76.81} & \textbf{71.01} & \textbf{76.55} & \textbf{72.67} \\
        ~ & $\lambda$-In-ResNet-110 & 65.67 & 76.59 & 70.72 & 76.40 & 72.35 \\        
    \cline{2-7}
        ~ & ResNet-164 & 60.88 & 61.77 & 45.66 & 57.75 & 56.51 \\
        ~ & In-ResNet-164 & \textbf{67.95} & 75.96 & 68.95 & 75.31 & \textbf{72.05} \\
        ~ & $\lambda$-In-ResNet-164 & 65.72 & \textbf{76.27} & \textbf{69.74} & \textbf{75.80} & 71.88 \\          
    \cline{2-7}        
        ~ & ResNeXt & 55.12 & 58.21 & 39.14 & 52.06 & 51.13 \\
        ~ & In-ResNeXt & \textbf{55.26} & \textbf{59.87} & \textbf{39.75} & \textbf{54.12} & \textbf{52.25} \\
        ~ & $\lambda$-In-ResNeXt & 51.27 & 57.20 & 37.23 & 51.25 & 49.24 \\        
    \hline
        \multirow{9}{*}{CIFAR-100} & ResNet-110 & 25.36 & 29.69 & 20.16 & 27.81 & 25.76 \\
        ~ & In-ResNet-110 & 32.00 & \textbf{38.81} & 30.00 & 37.71 & 34.63 \\
        ~ & $\lambda$-In-ResNet-110 & \textbf{32.15} & 38.77 & \textbf{30.02} & \textbf{37.82} & \textbf{34.69} \\          
    \cline{2-7}
        ~ & ResNet-164 & 27.55 & 30.90 & 20.40 & 28.97 & 26.95 \\
        ~ & In-ResNet-164 & \textbf{33.05} & \textbf{39.50} & \textbf{29.77} & \textbf{38.17} & \textbf{35.12} \\
        ~ & $\lambda$-In-ResNet-164 & 32.92 & 38.79 & 29.08 & 37.53 & 34.58 \\          
    \cline{2-7}       
        ~ & ResNeXt & \textbf{26.83} & 28.29 & 17.09 & 25.67 & 24.47 \\
        ~ & In-ResNeXt & 25.85 & 29.90 & 18.59 & 27.72 & 25.52 \\
        ~ & $\lambda$-In-ResNeXt & 25.33 & \textbf{31.18} & \textbf{19.88} & \textbf{28.75} & \textbf{26.29} \\        
    \hline
    \end{tabular}
    \caption{Accuracy over the stochastic noise groups from CIFAR-10-C and CIFAR-100-C datasets, corresponded with perturbed CIFAR-10 and CIFAR-100 images from four types of stochastic noise, respectively. All of the results reported are averaged from 5 runs.}
    \label{stochastic-noise}
\end{table*}

\section{Experiments}\label{sec:Exp}

\subsection{Setup}

We evaluate our proposed model on CIFAR-10 and CIFAR-100 benchmarks, training and testing with the originally given dataset. Following \cite{resnet}, we adopt the simple data augmentation technique: padding $4$ pixels on each side of the image and sampling a $32 \times 32$ crop from it or its horizontal flip. For ResNet experiments, we select the pre-activated version of ResNet-110 and ResNet-164 as baseline architectures. For ResNeXt experiments, we select ResNeXt-29, 8 $\times$ 64d as baseline from \cite{resnext}.

We apply Eq. (\ref{In-ResNet1}) to ResNet-110, ResNet-164 and ResNeXt, and refer to them as In-ResNet-110, In-ResNet-164, and In-ResNeXt. We also apply Eq. (\ref{In-ResNet2}) to ResNet-110 and ResNet-164, referring to them as $\lambda$-In-ResNet-110, $\lambda$-In-ResNet-164, and $\lambda$-In-ResNeXt.

The parameters $\lambda_n$ of our interpolation models are initialized by randomly sampling from $\mathcal{U}[0.2, 0.25]$ in ($\lambda$-)In-ResNet-110 and ($\lambda$-)In-ResNeXt, and $\mathcal{U}[0.1, 0.2]$ in In-ResNet-164. The initialization of other parameters in ResNet and ResNeXt follows \cite{resnet} and \cite{resnext}, respectively.

For all of the experiments, we use SGD optimizer with batch size $= \; 128$. For ResNet and ($\lambda$-)In-ResNet experiments, we train for 160 (300) epochs for the CIFAR-10 (-100) benchmark; the learning rate starts with $0.1$, and is divided it by 10 at 80 (150) and 120 (225) epochs. We apply weight decay of 1e-4 and momentum of 0.9. For ResNeXt and ($\lambda$-)In-ResNeXt experiments, the learning rate starts at $0.05$, and is divided it by 10 at 150 and 225 epochs. We apply weight decay of 5e-4 and momentum of 0.9.

We focus on two types of performances: optimization difficulty and model robustness. For optimization difficulty, we test our model on the CIFAR testing dataset. For model robustness, we evaluate the accuracy of our model over the perturbed inputs, details of which are given in the next section. For each experiment, we conduct 5 runs with different random seeds and report the averaged result to reduce the impact of random variations. The standard deviations of reported results can be found in Appendix D.

\subsection{Measuring Robustness}
In this section we introduce the two types of perturbation methods that we use: stochastic noise perturbations and adversarial attacks. For stochastic noise, we leverage the stochastic noise groups in CIFAR-10-C and CIFAR-100-C dataset \cite{CIFAR-C} for testing. The four groups of stochastic noise are impulse noise, speckle noise, Gaussian noise, and shot noise. 
\begin{table}[tbp]
    \centering
    \begin{tabular}{l|c|c}
    \hline
        Model & CIFAR-10 & CIFAR-100 \\
    \hline
    \hline
        ResNet-110 & 93.58 & 72.73 \\
        In-ResNet-110 & 92.28 & 70.55 \\
        $\lambda$-In-ResNet-110 & 92.15 & 70.39 \\
    \hline        
        ResNet-164 & 94.46 & 76.06 \\
        In-ResNet-164 & 92.69 & 72.94 \\
        $\lambda$-In-ResNet-164 & 92.55 & 73.22 \\        
    \hline        
        ResNeXt & 96.35 & 81.63 \\
        In-ResNeXt & 96.48 & 81.64 \\
        $\lambda$-In-ResNeXt & 96.22 & 81.29 \\
    \hline      
    \end{tabular}
    \caption{Accuracy over CIFAR-10 and CIFAR-100 testing data, representing optimization difficulty of each model.  All of the results reported are averaged from 5 runs.}
    \label{acc}
\end{table}
For adversarial attacks, we consider three classical methods: Fast Gradient Sign Method (FGSM), Iterated Fast Gradient Sign Method (IFGSM), and Projected Gradient Descent (PGD). For a given data point $(\bx, y)$:

\begin{itemize}
    \item FGSM induces the adversarial example $\mathbf{x}'$ by moving with step size of $\epsilon$ at each component of the gradient descent direction, namely
    \begin{equation}
        \mathbf{x}' = \bx + \epsilon \cdot {\rm sign}(\nabla_{\bx}\mathcal{L}(\bx, y)).
    \end{equation}
    \item IFGSM performs FGSM with step size of $\alpha$, and clips the perturbed images within $[\bx-\epsilon, \bx+\epsilon]$ iteratively, namely
    \begin{equation}
        \bx^{(m+1)} = {\rm Clip}_{\bx, \epsilon} \left\{ \bx^{(m)} + \alpha \cdot {\rm sign}(\nabla_{\bx}\mathcal{L}(\bx^{(m)}, y)) \right\},
    \end{equation}    
    where $m = 1, 2, \cdots, M$, $\bx^{(0)} = \bx$, and $\bx^{(M)}$ is the induced adversarial image. In our experiments, we set $\alpha = 2/255$ and iteration times $M = 20$.
    \item PGD attack is the same with IFGSM, except that the $\bx^{(0)} = \bx + \delta$ with $\delta \sim \mathcal{U}[-\epsilon, \epsilon]$.
\end{itemize}
\subsection{Results}
\begin{figure}[tbp]
    \centering
    \includegraphics[scale=0.12]{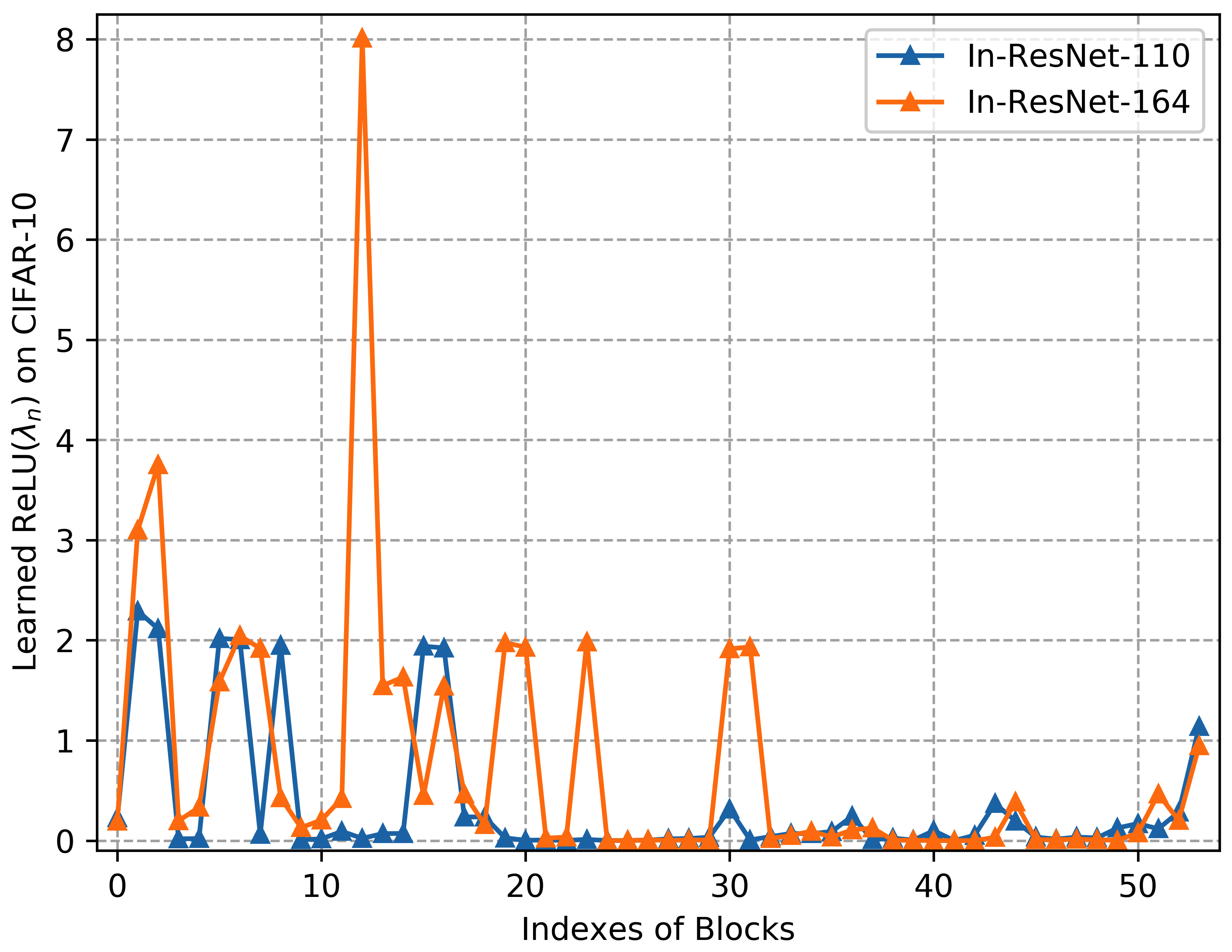}
    \caption{Learned interpolation coefficients in In-ResNet-110 and In-ResNet-164 models trained on CIFAR-10 benchmarks. }
    \label{learned_coeffs}
\end{figure}

\begin{table*}[tbp]
    \centering
    \begin{tabular}{l|l||r|r|r|r|r|r|r|r|r}
    \hline
        \multirow{2}{*}{Benchmark} & \multirow{2}{*}{Model} & \multicolumn{3}{c|}{FGSM} & \multicolumn{3}{|c|}{IFGSM} & \multicolumn{3}{|c}{PGD} \\
    \cline{3-11}
        ~ &  ~ & 1/255 & 2/255 & 4/255 & 1/255 & 2/255 & 4/255 & 1/255 & 2/255 & 4/255 \\
    \hline
    \hline
        \multirow{6}{*}{CIFAR-10} & ResNet-110 & 58.59 & 41.48 & 29.45 & 39.45 & 5.93 & 0.06 & 38.91 & 5.60 & 0.06 \\
        ~ & In-ResNet-110 & \textbf{71.97} & \textbf{55.24} & \textbf{38.26} & 65.70 & \textbf{32.05} & \textbf{5.14} & 65.66 & \textbf{31.74} & \textbf{5.01} \\
        ~ & $\lambda$-In-ResNet-110 & 71.06 & 50.84 & 30.05 & \textbf{65.93} & 30.72 & 3.52 & \textbf{65.81} & 30.45 & 3.41 \\
    \cline{2-11}       
        ~ & ResNet-164 & 63.32 & 44.37 & 30.21 & 46.79 & 8.19 & 0.09 & 46.43 & 7.77 & 0.07 \\
        ~ & In-ResNet-164 & \textbf{70.88} & \textbf{51.84} & \textbf{32.81} & \textbf{64.34} & \textbf{27.43} & \textbf{2.27} & \textbf{64.20} & \textbf{26.95} & \textbf{2.15} \\
        ~ & $\lambda$-In-ResNet-164 & 70.01 & 50.53 & 31.77 & 63.33 & 26.50 & 2.01 & 63.19 & 26.04 & 1.91 \\     
    \hline      
        \multirow{6}{*}{CIFAR-100} & ResNet-110 & 28.01 & \textbf{18.74} & \textbf{14.12} & 15.05 & 2.18 & 0.28 & 14.69 & 2.11 & 0.26 \\
        ~ & In-ResNet-110 & 32.24 & \textbf{18.74} & 11.84 & 23.44 & 4.92 & \textbf{0.55} & 23.22 & 4.81 & \textbf{0.53} \\
        ~ & $\lambda$-In-ResNet-110 & \textbf{32.79} & 18.40 & 11.24 & \textbf{24.17} & \textbf{5.17} & 0.53 & \textbf{24.03} & \textbf{5.00} & 0.51 \\
    \cline{2-11}       
        ~ & ResNet-164 & 35.15 & \textbf{23.58} & \textbf{17.04} & 21.23 & 3.45 & 0.29 & 20.78 & 3.31 & 0.22 \\
        ~ & In-ResNet-164 & 37.21 & 22.30 & 13.93 & 28.05 & 6.59 & \textbf{0.73} & 27.75 & 6.34 & \textbf{0.67} \\
        ~ & $\lambda$-In-ResNet-164 & \textbf{37.37} & 22.50 & 13.94 & \textbf{28.25} & \textbf{6.64} & 0.69 & \textbf{28.03} & \textbf{6.46} & 0.64 \\
    \hline              
    \end{tabular}
    \caption{Accuracy over perturbed CIFAR-10 and CIFAR-100 images from FGSM, IFGSM, and PGD adversarial attacks with different attack radii. All of the results reported are averaged over 5 runs.}
    \label{stochastic-adv}
\end{table*}
\textbf{Optimization difficulty} 
Table \ref{acc} shows the results of In-ResNet-110 and In-ResNet-164 as well as the baselines over CIFAR-10 and CIFAR-100 testing set. On one hand, it can be seen that for ($\lambda$-)In-ResNet-110 and ($\lambda$-)In-ResNet-164, there is accuracy drop within 3 percent compared with the ResNet baselines. This agrees with the fact that the interpolation model may be harder to optimize than ResNet. However, the performance of the interpolation models are still much better than that of the deep non-residual CNN models.

\textbf{Robustness against stochastic noise} Table \ref{stochastic-noise} shows the accuracies of all models over the perturbed CIFAR-10 and CIFAR-100 images from four types of stochastic noise. Our In-ResNet-110 and In-ResNet-164 models achieve substantial improvement over the ResNet-110 and ResNet-164 baselines. For perturbed CIFAR-10 images, accuracy of ($\lambda$-)In-ResNet-110 and ($\lambda$-)In-ResNet-164 are over 15 \% higher than ResNet-110 and ResNet-164 baselines on average. For perturbed CIFAR-100 images, accuracy of ($\lambda$-)In-ResNet-110 and ($\lambda$-)In-ResNet-164 are over 5\% higher than ResNet-110 and ResNet-164 baselines on average. In-ResNeXt models improves the accuracy of the perturbed images over ResNeXt as well.

\textbf{Robustness against adversarial attacks} Table \ref{stochastic-adv} shows the accuracies of all models over the perturbed CIFAR-10 and CIFAR-100 images from FGSM, IFGSM, and PGD attacks at different attack radii of $1/255$, $2/255$, and $4/255$. Most of the robustness results of our ($\lambda$-)In-ResNet-110 and ($\lambda$-)In-ResNet-164 models are higher than those of the ResNet-110 and ResNet-164 models, which is empirically consistent with our Lyapunov analysis. Especially on CIFAR-10 benchmark, our In-ResNet-110 and In-ResNet-164 models obtain significant robustness improvement against the strong IFGSM and PGD attacks at the radii of $1/255$ and $2/255$.

\textbf{Learned interpolation coefficients} To get a better understanding of the interpolation model, we plot the interpolation coefficients $\{{\rm ReLU}(\lambda_n)\}$ in In-ResNet-110 and In-ResNet-164 models trained on CIFAR-10 benchmarks. As shown in Fig \ref{learned_coeffs}, most of the interpolation coefficients lie within the range $[0, \; 1]$, suggesting an interpolating behaviour. According to Eq. (\ref{In-ResNet1}), interpolation coefficients lying within $[1, \; 2]$ represent negative skip connections, with the absolute weight scale of less than $1$. Very few of the interpolation coefficients are larger than $2$, which is in line with the stability range of forward Euler scheme. In general, 79.6\%(72.2\%) of $\lambda_n$'s in In-ResNet-110(164) are larger than 0.01, which accounts for the significance in robustness. More visualizations of learned interpolation coefficients can be found in Appendix A.

\begin{table*}[tbp]
    \centering
    \begin{tabular}{l|r|r|r|r|r}
        \hline
        Model & Acc. & noise & FGSM & IFGSM & PGD \\
        \hline
        \hline
        ResNet-110 & \textbf{93.58} & 53.70 & 41.48 & 5.93 & 5.60 \\
        \hline        
        In-ResNet-110 & 92.28 & \textbf{72.67} & \textbf{55.24} & \textbf{32.05} & \textbf{31.74} \\
        In-ResNet-sig-110 & 93.49 & 55.04 & 44.65 & 6.29 & 5.94 \\
        In-ResNet-gating-110 & 93.46 & 54.53 & 41.25 & 5.65 & 5.33 \\
        In-ResNet-gating-sig-110 & 90.68 & 68.04 & 46.17 & 21.89 & 21.65 \\
        \hline
    \end{tabular}
    \caption{Accuracy and robustness of In-ResNet-110, In-ResNet-sig-110, In-ResNet-gating-110, and In-ResNet-gating-sig-110 models, as well as the ResNet-110 baseline on CIFAR-10 benchmarks. ``Acc." denotes the accuracy over CIFAR-10 testing set. ``noise" denotes the \textbf{average} accuracy of the four stochastic noise groups from CIFAR-10-C. ``FGSM", ``IFGSM", and ``PGD" represent accuracy under the corresponding attacks at the attack radius of $2/255$. All of the results reported are averaged over 5 runs.}
    \label{comp-results}
\end{table*}

\textbf{Loss landscape analysis} As is given by the Lyapunov analysis, the robustness improvement is theoretically provided in that the damped models enjoy more locally stable points than the original ones. To further verify this, we visualize the loss landscapes of In-ResNet-110 and ResNet-110 models trained on CIFAR-10 benchmark along the attack direction. For a instance $(\bx, y)$, we plot the loss function $L(\bx, y)$ of along the FGSM attack direction. We also select a random orthogonal direction from the FGSM attack one and plot the model predictions of each grids. The unit of each axis in the figures is at the scale of $1/255$. To better analyze model robustness, we select the data instance $(\bx, y)$ for the CIFAR-10-C dataset, namely $(\bx^{'}, y)$, where $\bx^{'}$ is $\bx$ with injected stochastic noise.

Figure \ref{landscape} illustrates the loss landscapes of the ResNet-110 and In-ResNet-110 models along the FGSM attack direction. We select two input data instances: for Figure \ref{landscape}-\{(a)-(c)\}, the input is the $3$-th image in the shot noise group of the CIFAR-10-C dataset, the ground-truth label of which is ship; for Figure \ref{landscape}-\{(d)-(f)\}, the input is the $8$-th image in the speckle noise group of the CIFAR-10-C dataset, the ground-truth label of which is horse. For the first input example, ResNet-110 and In-ResNet-110 both make the correct prediction; for the second input example, they both make the wrong one. It can be seen that the added damping term have damped the loss landscape along the FGSM attack direction, resulting in a much weaker amplitude (the first example), or even turned the amplifying loss landscape of ResNet-110 into a damping one of In-ResNet-110 (the second example). Whether ResNet-110 and In-ResNet-110 both make the correct prediction or the wrong one, it is clear that the In-ResNet-110 model enjoys better robustness than ResNet-110, which agrees with our Lyapunov analysis that the damping term has introduced more locally stable points.
\vspace{-10pt}
\begin{figure*}[tbp]
\centering
\subfigure[Loss landscape.]{
    \includegraphics[scale=0.055]{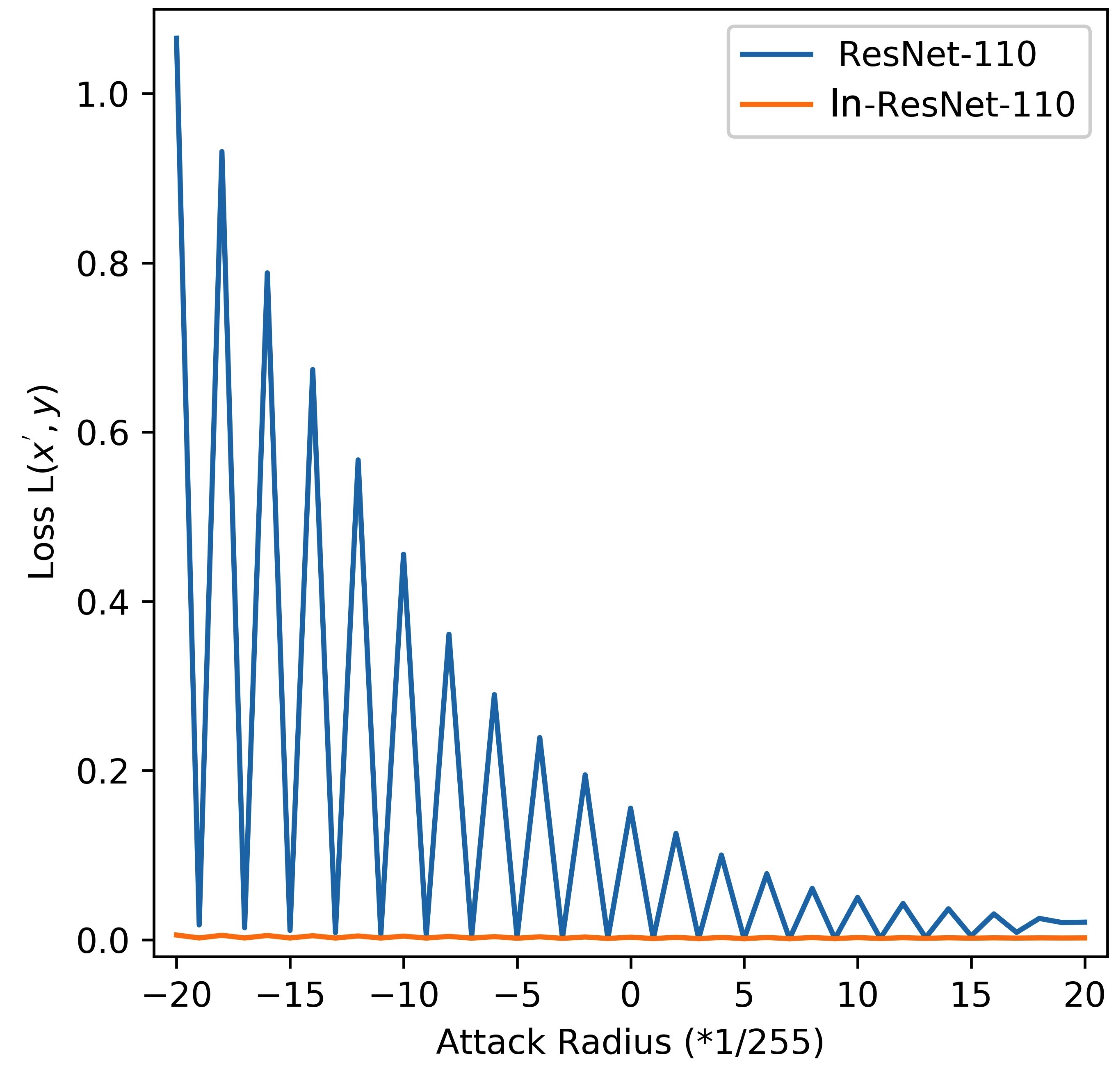}
}
\subfigure[ResNet-110 predictions]{
    \includegraphics[scale=0.22]{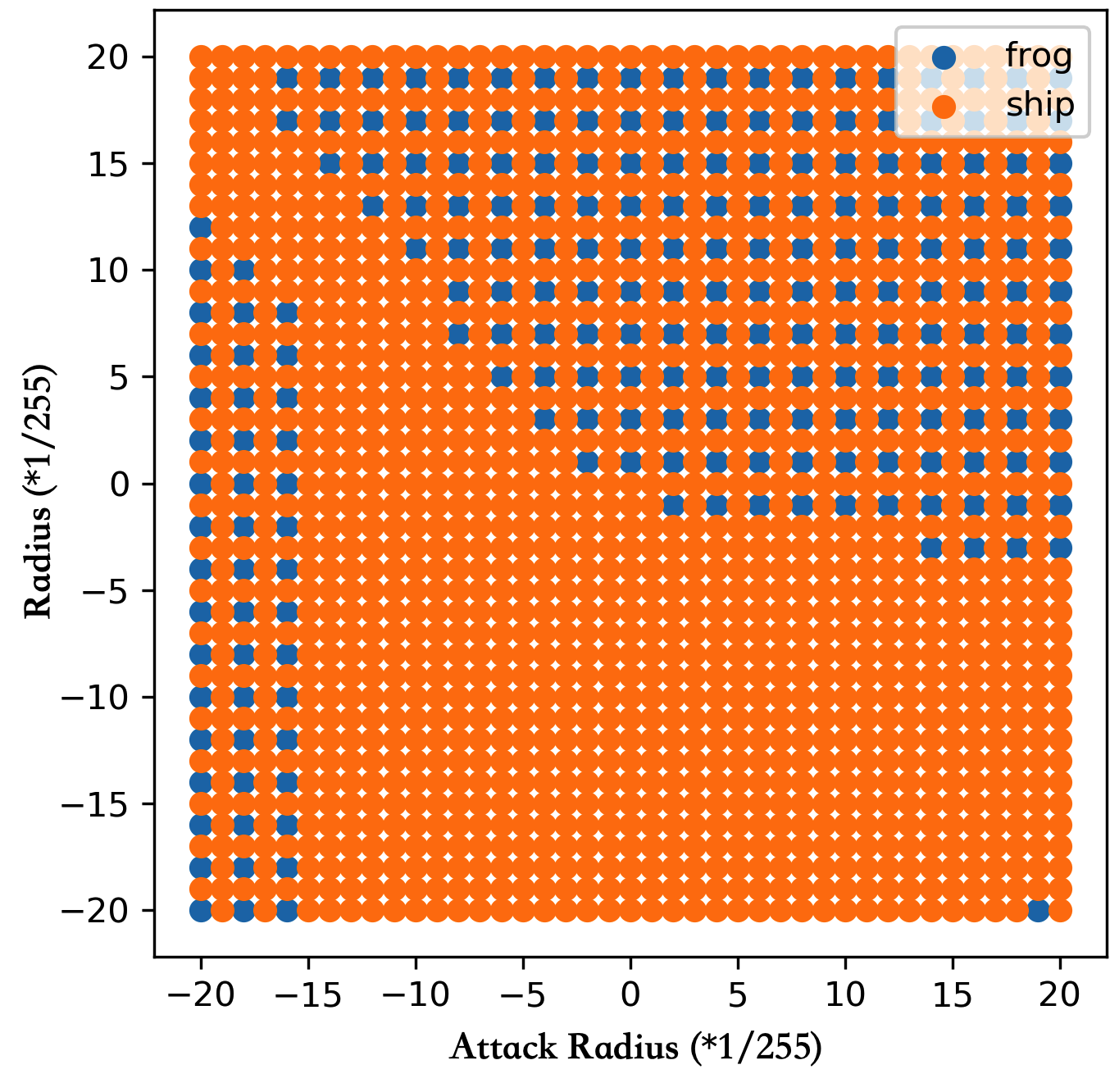}
}
\subfigure[In-ResNet-110 predictions]{
    \includegraphics[scale=0.22]{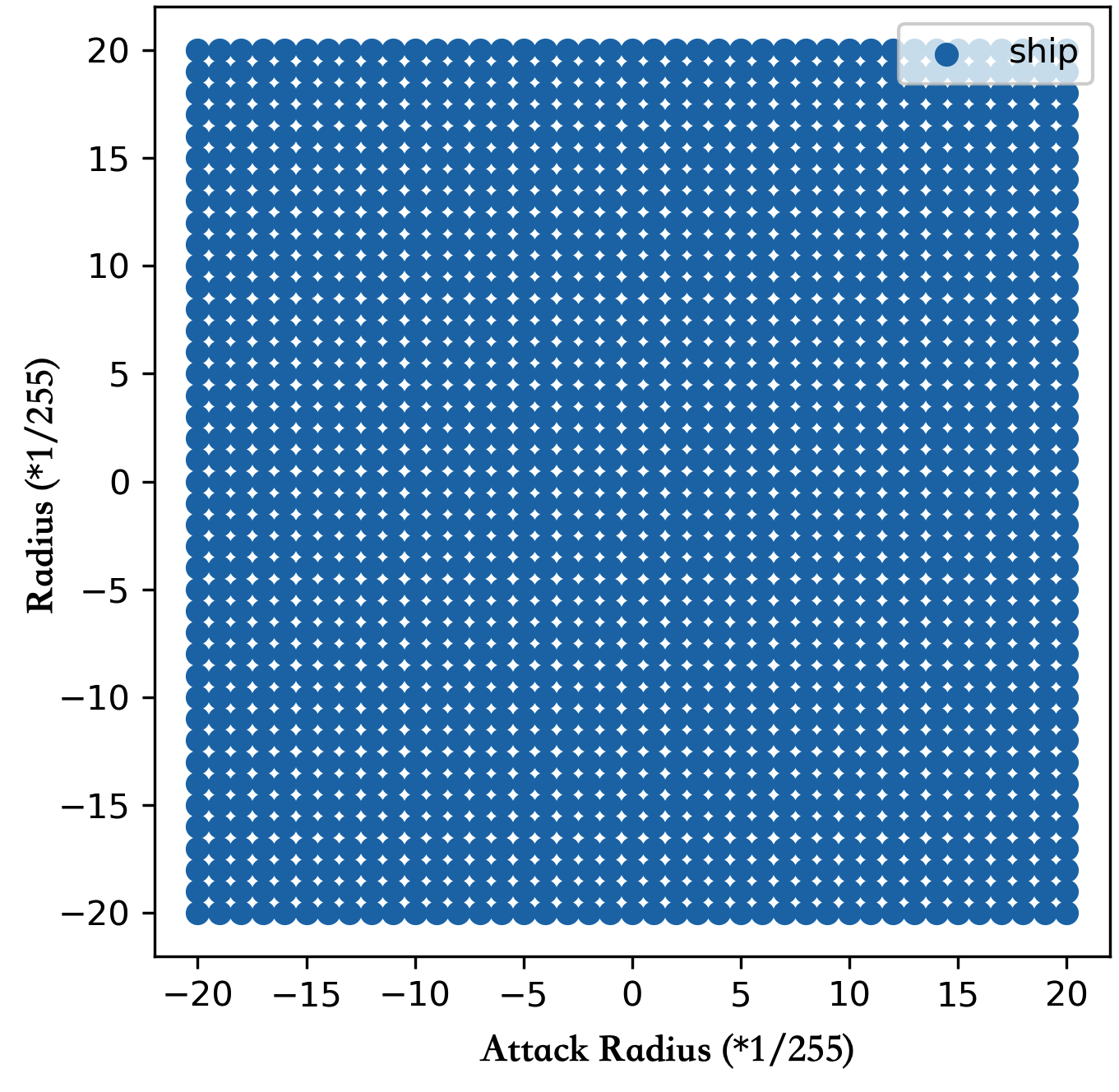}
}
\quad
\subfigure[Loss landscape.]{
    \includegraphics[scale=0.055]{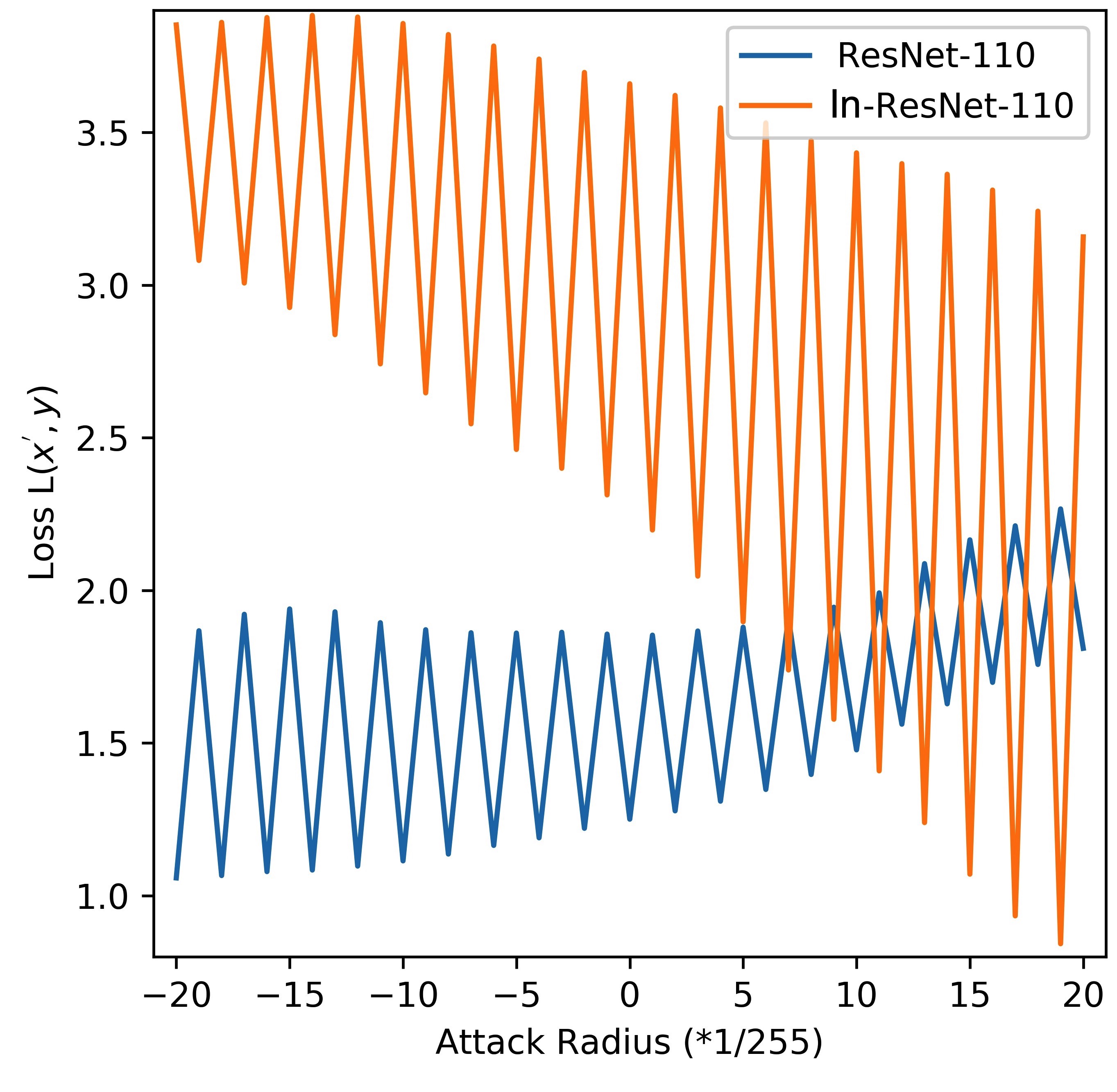}
}
\subfigure[ResNet-110 predictions]{
    \includegraphics[scale=0.22]{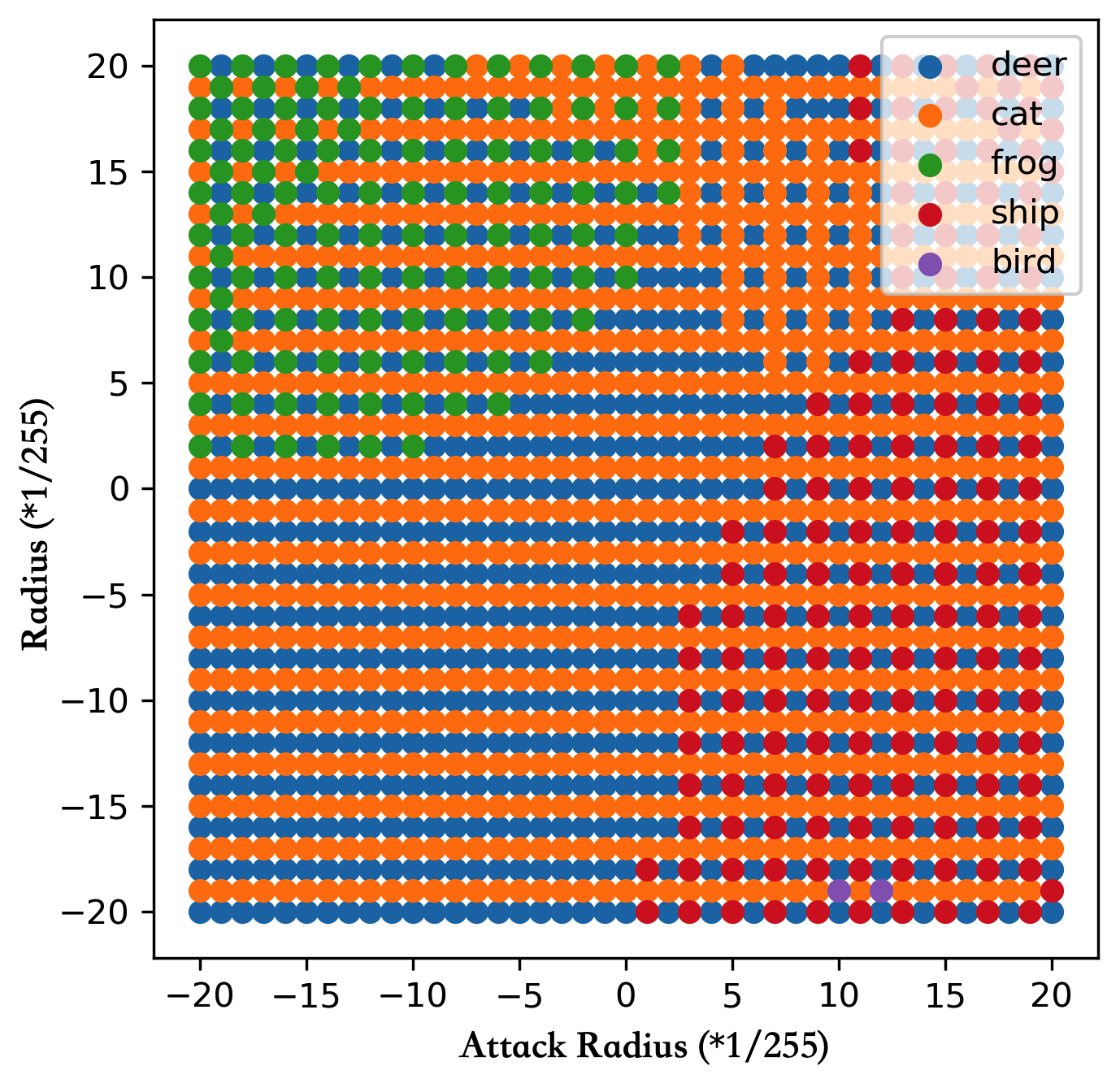}
}
\subfigure[In-ResNet-110 predictions]{
    \includegraphics[scale=0.22]{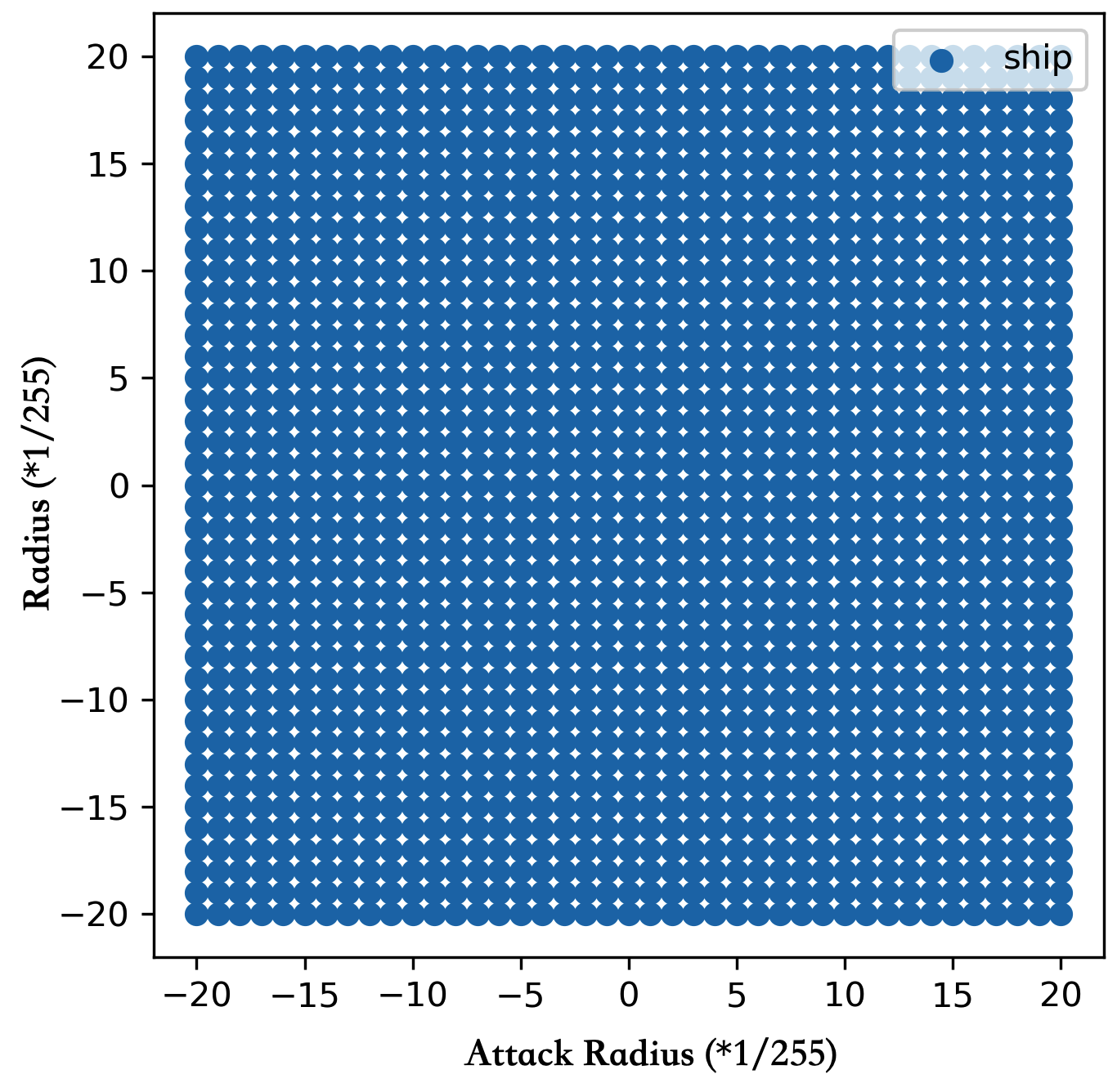}
}
\caption{ The input data instance is the $3$-th/$8$-th image in the shot/speckle noise group of the CIFAR-10-C dataset for (a)-(c)/(d)-(f), the ground truth label of which is ship/horse. For (a)-(c)/(d)-(f), ResNet-110 and In-ResNet-110 both make the correct / wrong prediction. (a) and (c) depict the loss landscape of ResNet-110 and In-ResNet-110 along the FGSM attack direction. \{(b) and (e)\} / \{(c) and (f)\} illustrate model predictions of \{ResNet-110\} / \{In-ResNet-110\} at each grids determined by the FGSM attack direction and a random orthogonal direction.}
\label{landscape}
\end{figure*}
\begin{table*}[tbp]
    \centering
    \begin{tabular}{l|c|r|r|r|r|r}
        \hline
        Model & Initialization & Acc. & noise & FGSM & IFGSM & PGD \\
        \hline
        \hline
        ResNet & - & \textbf{93.58} & 53.70 & 41.48 & 5.93 & 5.60 \\
        \hline
         & $\mathcal{U}[0.00, 0.10]$ & \textbf{93.51} & 55.15 & 46.74 & 8.39 & 7.96 \\
         & $\mathcal{U}[0.10, 0.20]$ & 93.25 & 62.88 & 49.58 & 16.89 & 16.46 \\
        In-ResNet &$\mathcal{U}[0.20, 0.25]$ & 92.28 & 72.67 & 55.24 & 32.05 & 31.74 \\
         &$\mathcal{U}[0.25, 0.30]$ & 91.63 & 76.20 & 55.79 & 36.53 & 36.28 \\
        &$\mathcal{U}[0.30, 0.40]$  & 90.62 & \textbf{79.35} & \textbf{55.95} & \textbf{41.07} & \textbf{40.84} \\
        \hline           
         & $\mathcal{U}[0.00, 0.10]$ & \textbf{93.41} & 54.18 & 42.28 & 6.78 & 6.48 \\
         & $\mathcal{U}[0.10, 0.20]$ & 92.86 & 63.58 & 46.07 & 16.99 & 16.60 \\
        $\lambda$-In-ResNet &$\mathcal{U}[0.20, 0.25]$ & 92.15 & 72.35 & 50.84 & 30.72 & 30.45 \\        
         &$\mathcal{U}[0.25, 0.30]$ & 91.30 & 75.65 & 53.29 & 36.90 & 36.74 \\
        &$\mathcal{U}[0.30, 0.40]$  & 90.17 & \textbf{79.66} & \textbf{55.03} & \textbf{41.06} & \textbf{40.94} \\
        \hline         
    \end{tabular}
    \caption{Accuracy and robustness results of In-ResNet-110 and $\lambda$-In-ResNet-110 with different initialization schemes. ``Acc." denotes the accuracy over CIFAR-10 testing set. ``noise" denotes the \textbf{average} accuracy of the four stochastic noise groups from CIFAR-10-C. ``FGSM", ``IFGSM", and ``PGD" represent model accuracy under the corresponding attacks at the radius of $2/255$. All of the results reported are averaged over 5 runs except for $\mathcal{U}[0.3, 0.4]$: they are averaged over 4(2) runs, as 1(3) out of 5 runs for In-ResNet-110 ($\lambda$-In-ResNet-110) failed with a final accuracy of 10\% on CIFAR-10 test set.}
    \label{init-results}
\end{table*}

\subsection{Comparison among In-ResNet Variants}
While Eq. (\ref{In-ResNet1}) depicts the In-ResNet structure, in this section, we propose several variants of In-ResNet and compare their performances. To facilitate the discussion, the In-ResNet can be written in the general form:
\begin{equation}
    \bx_{n+1} = (1 - {\rm act}(d(\bx_{n})))\bx_n + \Delta t f_n(\bx_n),
\end{equation}
\noindent where $d(x_{n})$ is the function determining the interpolation coefficients. ${\rm act}$ is the activation function. For In-ResNet, the $d(x_{n})$ is a learnable scalar parameter $\lambda_n$; ${\rm act}$ is ${\rm ReLU}$ function. We propose several In-ResNet variants:

\begin{itemize}
    \item $d(\bx_{n}) = \lambda_n$, ${\rm act} = {\rm sigmoid}$: we replace the activation function to be ${\rm sigmoid}$, which restricts the interpolation coefficients to be within $[0, \; 1]$, and thus guarantees that the learned model is an interpolation. We refer to it as In-ResNet-sig.
    \item $d(\bx_{n}) = W_d \bx_n + b_d$, ${\rm act} = {\rm ReLU}$: we let the learnable scalar parameters determined by a linear transformation from input $x_n$, yielding a gating mechanism. We refer to it as In-ResNet-gating.
    \item $d(\bx_{n}) = W_d \bx_n + b_d$, ${\rm act} = {\rm sigmoid}$: based on the previous variant, we further replace the activation function to be ${\rm sigmoid}$. It is noteworthy that this variant is the shortcut-only gating mechanism discussed in \cite{resnet}. We refer to it as In-ResNet-gating-sig.
\end{itemize}

We use In-ResNet-110 as the basic In-ResNet model and experiment on CIFAR-10 benchmark to compare their performance. The accuracy and robustness results are reported averagely from 5 runs, shown in Table \ref{comp-results}.  We elaborately tune the initialization intervals and report the model with the largest sum of the accuracy over both the CIFAR-10 testing set and the noise groups in the CIFAR-10-C dataset. 

It can be seen that In-ResNet-110 leads to the largest robustness improvements over ResNet-110 baseline, with a relatively small accuracy drop. The In-ResNet-sig-110 model achieves better accuracy result than In-ResNet-110, however, its performance on robustness improvements are marginal. This is because the learned interpolation coefficients in In-ResNet-sig-110 are close to $0$, resulting in nearly identity skip-connections. Similarly, the performance of In-ResNet-gating-110 is very close to ResNet-110 baseline due to the degeneration of its damped skip-connections. The In-ResNet-gating-sig-110 model also improves over the ResNet-110 baseline with a large margin in terms of robustness performance. The improvement, however, is less significant than our In-ResNet-110 model. The accuracy of the In-ResNet-gating-sig-110 model also lags behind In-ResNet-110, which may attribute to the extra optimization difficulty introduced by the gating mechanism.

\begin{table*}[htbp]
    \centering
    \begin{tabular}{l|r|r|r|r|r}
        \hline
        Model & Acc. & noise & FGSM & IFGSM & PGD \\
        \hline
        \hline
        ResNet-110 & 93.58 & 53.70 & 41.48 & 5.93 & 5.60 \\
        ResNet-110, ens & \textbf{95.03} & 55.70 & 43.99 & 6.26 & 5.93 \\ 
        In-ResNet-110 & 92.28 & 72.67 & 55.24 & 32.05 & 31.74 \\
        In-ResNet-110, ens & 94.03 & \textbf{75.86} & \textbf{58.42} & \textbf{34.44} & \textbf{34.03} \\
        $\lambda$-In-ResNet-110 & 92.15 & 72.35 & 50.84 & 30.72 & 30.45 \\
        $\lambda$-In-ResNet-110, ens & 94.00 & 75.29 & 53.66 & 32.95 & 32.77 \\
        \hline 
        ResNet-164 & 94.46 & 56.51 & 44.37 & 8.19 & 7.77 \\
        ResNet-164, ens & \textbf{95.44} & 58.76 & 46.54 & 8.53 & 8.14 \\ 
        In-ResNet-164 & 92.69 & 72.05 & 51.84 & 27.43 & 26.95 \\
        In-ResNet-164, ens & 94.26 & \textbf{75.26} & \textbf{54.72} & \textbf{28.97} & \textbf{28.51} \\
        $\lambda$-In-ResNet-164 & 92.55 & 71.88 & 50.53 & 26.50 & 26.04 \\
        $\lambda$-In-ResNet-164, ens & 94.20 & 74.97 & 53.17 & 27.74 & 27.30 \\      
        \hline
    \end{tabular}
    \caption{Comparison between the accuracy and robustness results of the ensemble model over 5 different runs and those of the single model (scores are averaged). ``Acc." denotes the accuracy over CIFAR-10 testing set. ``noise" denotes the \textbf{average} accuracy of the four stochastic noise groups from CIFAR-10-C. ``FGSM", ``IFGSM", and ``PGD" represent model accuracy under the corresponding attacks at the radius of $2/255$. }
    \label{ensemble}
\end{table*}

\subsection{Trade-off between Optimization and Robustness}\label{sec:analysis}
As is shown in Table \ref{acc}, while ($\lambda$-)In-ResNet enjoys better robustness, it suffers from optimization difficulty: an accuracy degeneration around 2\% is caused by our ($\lambda$-)In-ResNet model. In this section, we show that the initialization of $\lambda_n$ is of great importance to the optimization process. We use In-ResNet-110 model and $\lambda$-In-ResNet-110 model trained on CIFAR-10 benchmark as the basic model, initializing $\lambda_n$ by randomly sampling from $\mathcal{U}[x, y]$. For the basic model, we have that $\mathcal{U}[x, y] = \mathcal{U}[0.2, 0.25]$. We try the following initialization schemes as well: $\mathcal{U}[x, y] = \mathcal{U}[0, 0.1]$, $\mathcal{U}[0.1, 0.2]$,  $\mathcal{U}[0.2, 0.25]$,  $\mathcal{U}[0.25, 0.3]$, and $\mathcal{U}[0.3, 0.4]$. The accuracy and robustness results are reported averagely from 5 runs, shown in Table \ref{init-results}.

From the experimental results, we can see that the performance of ($\lambda$-)In-ResNet is sensitive to the initialization of $\lambda_n$. On one hand, as the initialization becomes larger, the model robustness goes up. This agrees with our Lyapunov analysis, as the larger initialization of $\lambda_n$'s tends to help model to converge to the larger final $\lambda_n$'s, yielding larger damping terms and better robustness. One the other hand, larger initialization leads to worse accuracy results. Especially for $\mathcal{U}[0.30, 0.40]$, 1(3) out of 5 runs of In-ResNet-110 ($\lambda$-In-ResNet-110) fails the optimization with a final accuracy of $10 \%$. This can be interpreted that the damped shortcuts hamper information propagation and lead to optimization difficulty \cite{resnet}. More results on CIFAR-100 benchmark can be found in Appendix B.

\subsection{Effect of Model Ensemble}

\begin{figure}
    \centering
    \includegraphics[scale=0.06]{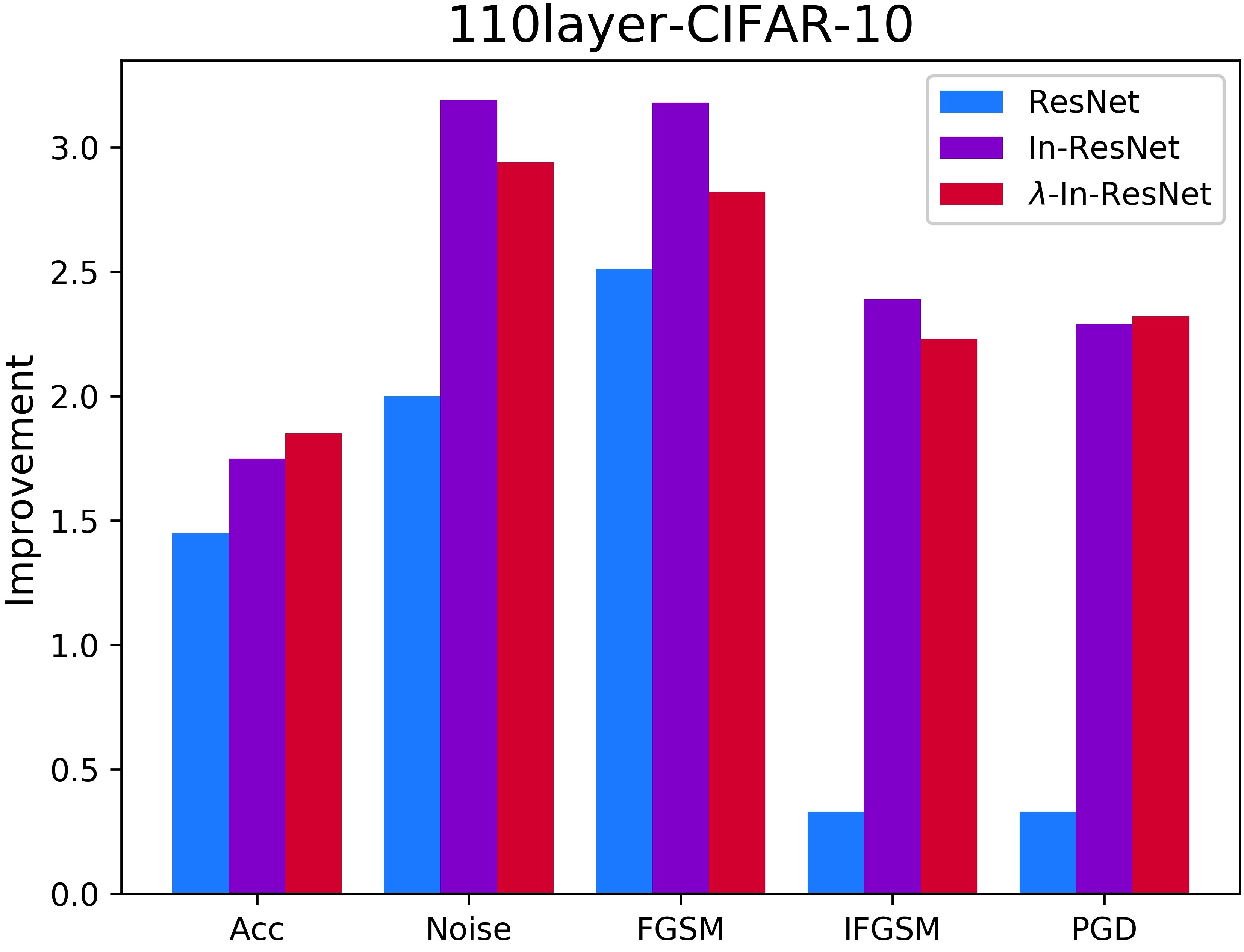}
    \caption{The accuracy \textbf{improvements} over single models for the ensemble ResNet-110, In-ResNet-110 and $\lambda$-In-ResNet-110 over CIFAR-10 dataset. Both of the ensemble of our models have more significant accuracy improvements than the ensemble of the baseline ResNet-110 model.}
    \label{imp}
\end{figure}

It is known that an ensemble model is more robust than a single model \cite{EnResNet}. To further improve accuracy and robustness, we perform model ensemble over the 5 different runs of baseline and our models. Table \ref{ensemble} shows the comparison between ensemble models and single models for ResNet-110, In-ResNet-110 and $\lambda$-ResNet-110 over CIFAR-10 dataset. It can be seen that all of the ensemble models are more robust and more accurate than the corresponding single models. 

We also plot the accuracy \textbf{improvements} over single models for the ensemble ResNet-110, In-ResNet-110 and $\lambda$-In-ResNet-110. As shown in \ref{imp}, both of the ensemble of our models have more significant accuracy improvements than the ensemble of the baseline ResNet-110 model. This can be attributed to the performance difference among different runs of our model due to optimization difficulty. More results and visualizations of the effect of ensemble method can be found in Appendix C.

\section{Conclusion}\label{sec:con}

While the relationship between ODEs and non-residual networks remains unclear, in this paper, we present a novel ODE model by adding a damping term. By adjusting the interpolation coefficient, the proposed model unifies the interpretation of both residual and non-residual networks. Lyapunov analysis and experimental results on CIFAR-10 and CIFAR-100 benchmarks reveals better robustness of the proposed interpolated networks against both stochastic noise and several adversarial attack methods. Loss landscape analysis reveals the improved robustness of our method along the attack direction. Furthermore, experiments show that the performance of proposed model is sensitive to the initialization of the interpolation coefficients, demonstrating trade-off between optimization difficulty and robustness. The significance of the design of interpolated networks is shown by comparing several model variants. Future work includes determining the interpolated coefficients as a black-box process and leveraging data augmentation techniques to improve our models.

\section*{Acknowledgements}

We thank all the anonymous reviewers for their suggestions. Yang Liu is supported by the National Key R\&D Program of China (No. 2017YFB0202204), National Natural Science Foundation of China (No. 61925601, No. 61761166008), and Huawei Technologies Group Co., Ltd. Chenglong Bao is supported by National Natural Sciences Foundation of China (No. 11901338) and Tsinghua University Initiative Scientific Research Program. Zuoqiang Shi is supported by National Natural Sciences Foundation of China (No. 11671005). This work is also supported by Beijing Academy of Artificial Intelligence.

\bibliography{interpolation}

\begin{thebibliography}{24}
\providecommand{\natexlab}[1]{#1}
\providecommand{\url}[1]{\texttt{#1}}
\expandafter\ifx\csname urlstyle\endcsname\relax
  \providecommand{\doi}[1]{doi: #1}\else
  \providecommand{\doi}{doi: \begingroup \urlstyle{rm}\Url}\fi

\bibitem[Chang et~al.(2018)Chang, Meng, Haber, Ruthotto, Begert, and
  Holtham]{rev1}
Chang, B., Meng, L., Haber, E., Ruthotto, L., Begert, D., and Holtham, E.
\newblock Reversible architectures for arbitrarily deep residual neural
  networks.
\newblock In \emph{AAAI}, 2018.

\bibitem[Chang et~al.(2019)Chang, Chen, Haber, and Chi]{antisymmetricrnn}
Chang, B., Chen, M., Haber, E., and Chi, E.~H.
\newblock Antisymmetricrnn: A dynamical system view on recurrent neural
  networks.
\newblock In \emph{ICLR}, 2019.

\bibitem[Chen(2001)]{chen2001stability}
Chen, G.
\newblock Stability of nonlinear systems.
\newblock \emph{Wiley Encyclopedia of Electrical and Electronics Engineering},
  2001.

\bibitem[Chen et~al.(2018)Chen, Rubanova, Bettencourt, and Duvenaud]{neuralode}
Chen, R. T.~Q., Rubanova, Y., Bettencourt, J., and Duvenaud, D.
\newblock {Neural Ordinary Differential Equations}.
\newblock In \emph{NeurIPS}, 2018.

\bibitem[Dupont et~al.(2019)Dupont, Doucet, and Teh]{aneuralode}
Dupont, E., Doucet, A., and Teh, Y.~W.
\newblock Augmented neural odes.
\newblock In \emph{NeurIPS}, pp.\  3134--3144, 2019.

\bibitem[E(2017)]{eproposal}
E, W.
\newblock A proposal on machine learning via dynamical systems.
\newblock \emph{Communications in Mathematics and Statistics}, 5\penalty0
  (1):\penalty0 1--11, 2017.

\bibitem[Haber \& Ruthotto(2017)Haber and Ruthotto]{rev2}
Haber, E. and Ruthotto, L.
\newblock Stable architectures for deep neural networks.
\newblock \emph{Inverse Problems}, 34\penalty0 (1):\penalty0 014004, 2017.

\bibitem[Han et~al.(2018)Han, Jentzen, and E]{Han8505}
Han, J., Jentzen, A., and E, W.
\newblock Solving high-dimensional partial differential equations using deep
  learning.
\newblock \emph{PNAS}, 115\penalty0 (34):\penalty0 8505--8510, 2018.

\bibitem[Hanshu et~al.(2019)Hanshu, Jiawei, Vincent, and
  Jiashi]{RobustNeuralODE}
Hanshu, Y., Jiawei, D., Vincent, T., and Jiashi, F.
\newblock On robustness of neural ordinary differential equations.
\newblock In \emph{ICLR}, 2019.

\bibitem[He et~al.(2016)He, Zhang, Ren, and Sun]{resnet}
He, K., Zhang, X., Ren, S., and Sun, J.
\newblock Identity mappings in deep residual networks.
\newblock In \emph{ECCV}, pp.\  630--645. Springer, 2016.

\bibitem[Hendrycks \& Dietterich(2019)Hendrycks and Dietterich]{CIFAR-C}
Hendrycks, D. and Dietterich, T.
\newblock Benchmarking neural network robustness to common corruptions and
  perturbations.
\newblock \emph{ICLR}, 2019.

\bibitem[Li et~al.(2018)Li, Xu, Taylor, Studer, and Goldstein]{visualization}
Li, H., Xu, Z., Taylor, G., Studer, C., and Goldstein, T.
\newblock Visualizing the loss landscape of neural nets.
\newblock In \emph{NeurIPS}, pp.\  6389--6399, 2018.

\bibitem[Liu et~al.(2019)Liu, Si, Cao, Kumar, and Hsieh]{NeuralSDE}
Liu, X., Si, S., Cao, Q., Kumar, S., and Hsieh, C.-J.
\newblock Neural sde: Stabilizing neural ode networks with stochastic noise.
\newblock \emph{arXiv:1906.02355}, 2019.

\bibitem[Lu et~al.(2018)Lu, Zhong, Li, and Dong]{lu-lm}
Lu, Y., Zhong, A., Li, Q., and Dong, B.
\newblock {Beyond Finite Layer Neural Networks: Bridging Deep Architectures and
  Numerical Differential Equations}.
\newblock In \emph{ICML}, 2018.

\bibitem[Lu et~al.(2019)Lu, Li, He, Sun, Dong, Qin, Wang, and
  Liu]{lu-transformer}
Lu, Y., Li, Z., He, D., Sun, Z., Dong, B., Qin, T., Wang, L., and Liu, T.-Y.
\newblock Understanding and improving transformer from a multi-particle dynamic
  system point of view.
\newblock \emph{arXiv:1906.02762}, 2019.

\bibitem[Lyapunov(1992)]{lyapunov}
Lyapunov, A.~M.
\newblock The general problem of the stability of motion.
\newblock \emph{International journal of control}, 55\penalty0 (3):\penalty0
  531--534, 1992.

\bibitem[Reshniak \& Webster(2019)Reshniak and Webster]{ImplicitResNet}
Reshniak, V. and Webster, C.
\newblock Robust learning with implicit residual networks.
\newblock \emph{arXiv:1905.10479}, 2019.

\bibitem[Su et~al.(2018)Su, Zhang, Chen, Yi, Chen, and Gao]{compareCVmodels}
Su, D., Zhang, H., Chen, H., Yi, J., Chen, P.-Y., and Gao, Y.
\newblock Is robustness the cost of accuracy?--a comprehensive study on the
  robustness of 18 deep image classification models.
\newblock In \emph{ECCV}, pp.\  631--648, 2018.

\bibitem[Tao et~al.(2018)Tao, Sun, Du, and Liu]{nonlocal}
Tao, Y., Sun, Q., Du, Q., and Liu, W.
\newblock Nonlocal neural networks, nonlocal diffusion and nonlocal modeling.
\newblock In \emph{NeurIPS}, pp.\  496--506, 2018.

\bibitem[Wang et~al.(2019)Wang, Yuan, Shi, and Osher]{EnResNet}
Wang, B., Yuan, B., Shi, Z., and Osher, S.~J.
\newblock Enresnet: Resnet ensemble via the feynman-kac formalism.
\newblock In \emph{NeurIPS}, 2019.

\bibitem[Xie et~al.(2017)Xie, Girshick, Doll{\'a}r, Tu, and He]{resnext}
Xie, S., Girshick, R., Doll{\'a}r, P., Tu, Z., and He, K.
\newblock Aggregated residual transformations for deep neural networks.
\newblock In \emph{CVPR}, pp.\  1492--1500, 2017.

\bibitem[Zhang et~al.(2019{\natexlab{a}})Zhang, Zhang, Lu, Zhu, and
  Dong]{lu-adv}
Zhang, D., Zhang, T., Lu, Y., Zhu, Z., and Dong, B.
\newblock You only propagate once: Painless adversarial training using maximal
  principle.
\newblock In \emph{NeurIPS}, 2019{\natexlab{a}}.

\bibitem[Zhang et~al.(2019{\natexlab{b}})Zhang, Han, Wynter, Low, and
  Kankanhalli]{smallResNet}
Zhang, J., Han, B., Wynter, L., Low, K.~H., and Kankanhalli, M.
\newblock Towards robust resnet: A small step but a giant leap.
\newblock In \emph{IJCAI}, 2019{\natexlab{b}}.

\bibitem[Zhu et~al.(2018)Zhu, Chang, and Fu]{zhu-rk}
Zhu, M., Chang, B., and Fu, C.
\newblock {Convolutional Neural Networks combined with Runge-Kutta Methods}.
\newblock \emph{arXiv.org}, February 2018.

\end{thebibliography}
\bibliographystyle{icml2020}

\onecolumn

\appendix

\section{Visualization of Learned Interpolation Coefficients}\label{appendix_a}

Here we provide more visualizations of learned interpolation coefficients in ($\lambda$-)In-ResNet-110 and ($\lambda$-)In-ResNet-164 trained on CIFAR-10 and CIFAR-100 dataset. Figure \ref{learned_coeffs-10-lamin} illustrates the coefficients in $\lambda$-In-ResNet-110 and $\lambda$-In-ResNet-164 models trained on CIFAR-10 benchmarks, and Figure \ref{learned_coeffs-100-in} for In-ResNet-110 and In-ResNet-164 on CIFAR-100, Figure \ref{learned_coeffs-100-lamin} for $\lambda$-In-ResNet-110 and $\lambda$-In-ResNet-164 on CIFAR-100.

\begin{figure}[htbp]
    \centering
    \includegraphics[scale=0.172]{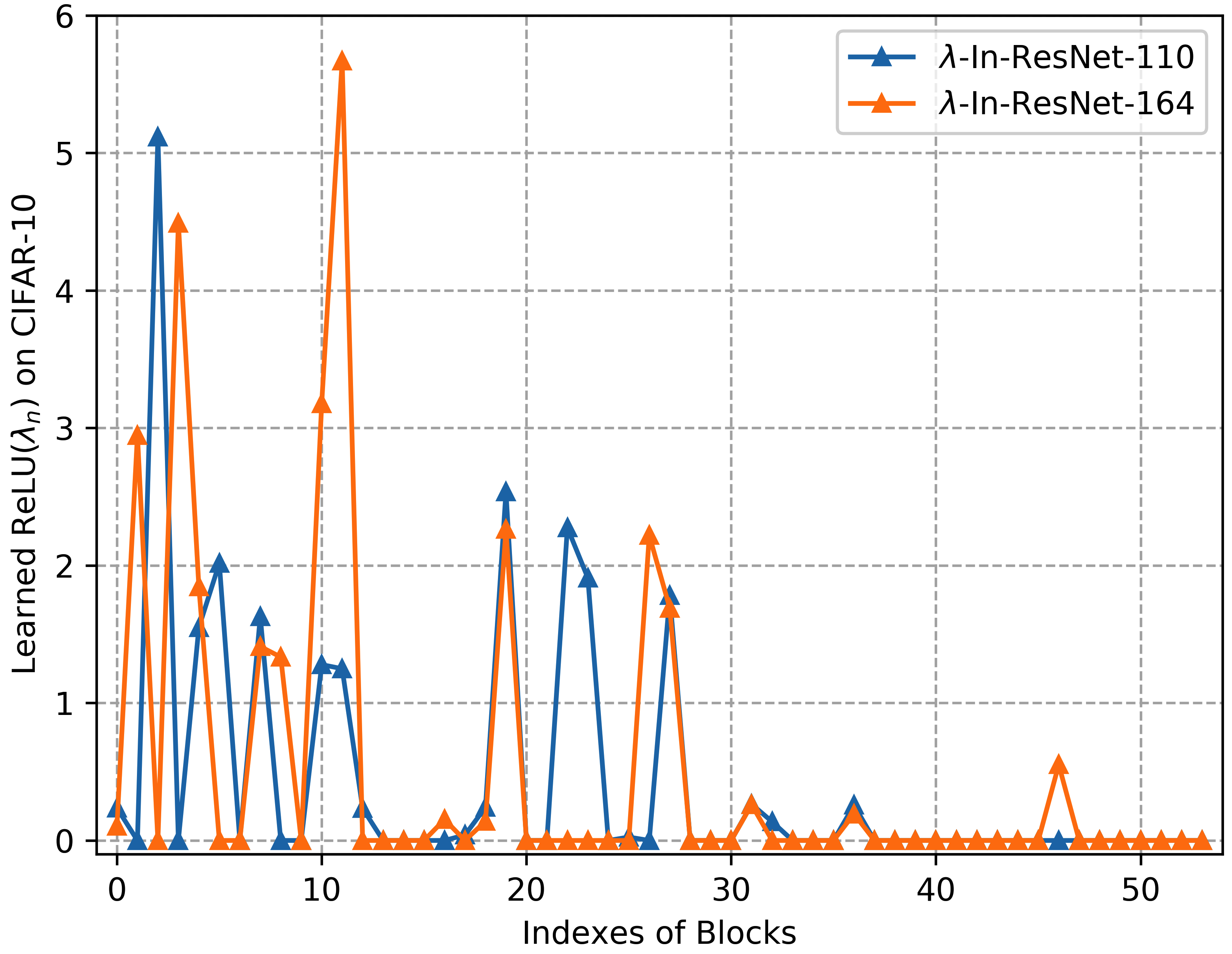}
    \caption{Learned interpolation coefficients in $\lambda$-In-ResNet-110 and $\lambda$-In-ResNet-164 models trained on CIFAR-10 benchmarks. }
    \label{learned_coeffs-10-lamin}
\end{figure}

\

\begin{figure}[htbp]
    \centering
    \includegraphics[scale=0.172]{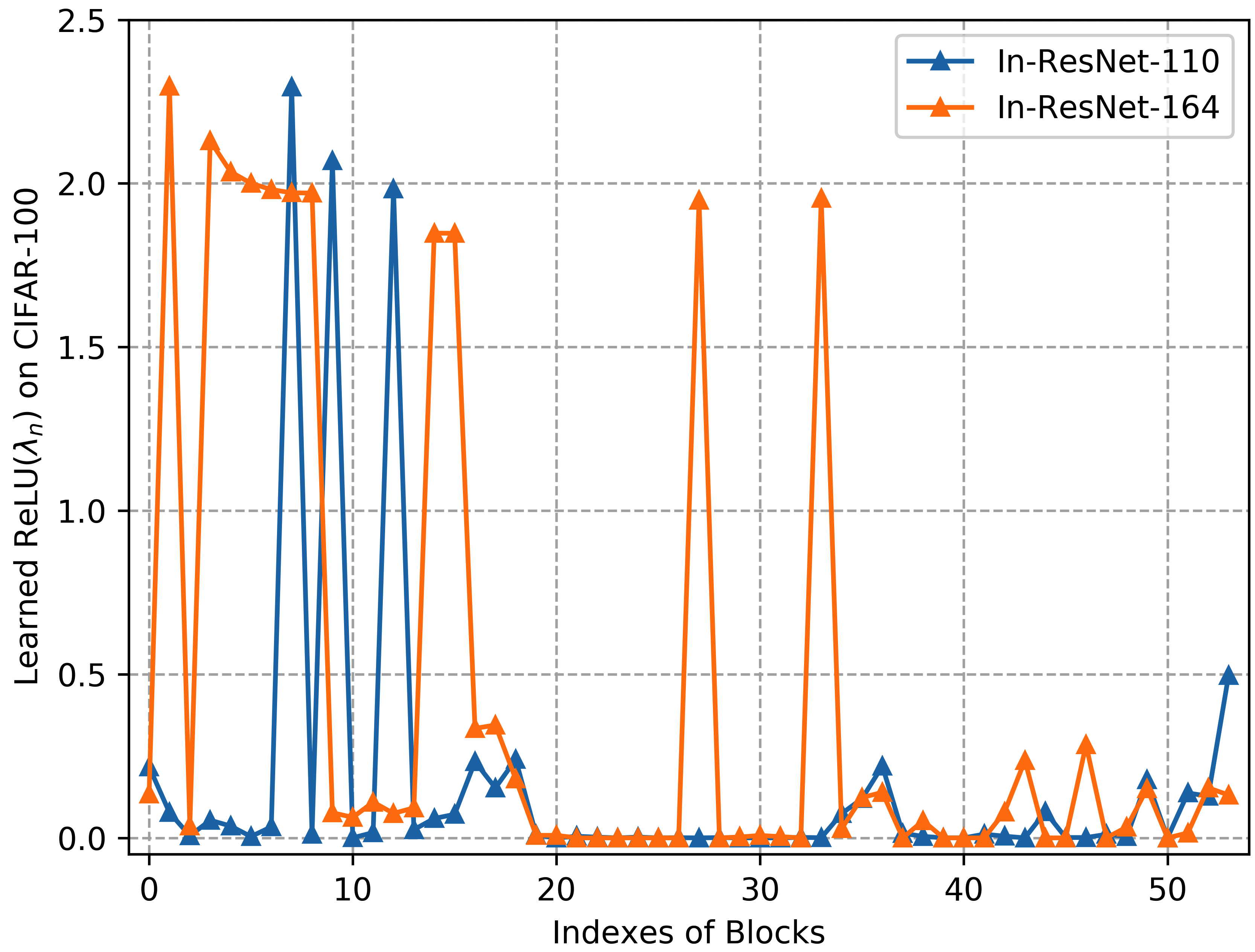}
    \caption{Learned interpolation coefficients in In-ResNet-110 and In-ResNet-164 models trained on CIFAR-100 benchmarks. }
    \label{learned_coeffs-100-in}
\end{figure}

\begin{figure}[htbp]
    \centering
    \includegraphics[scale=0.172]{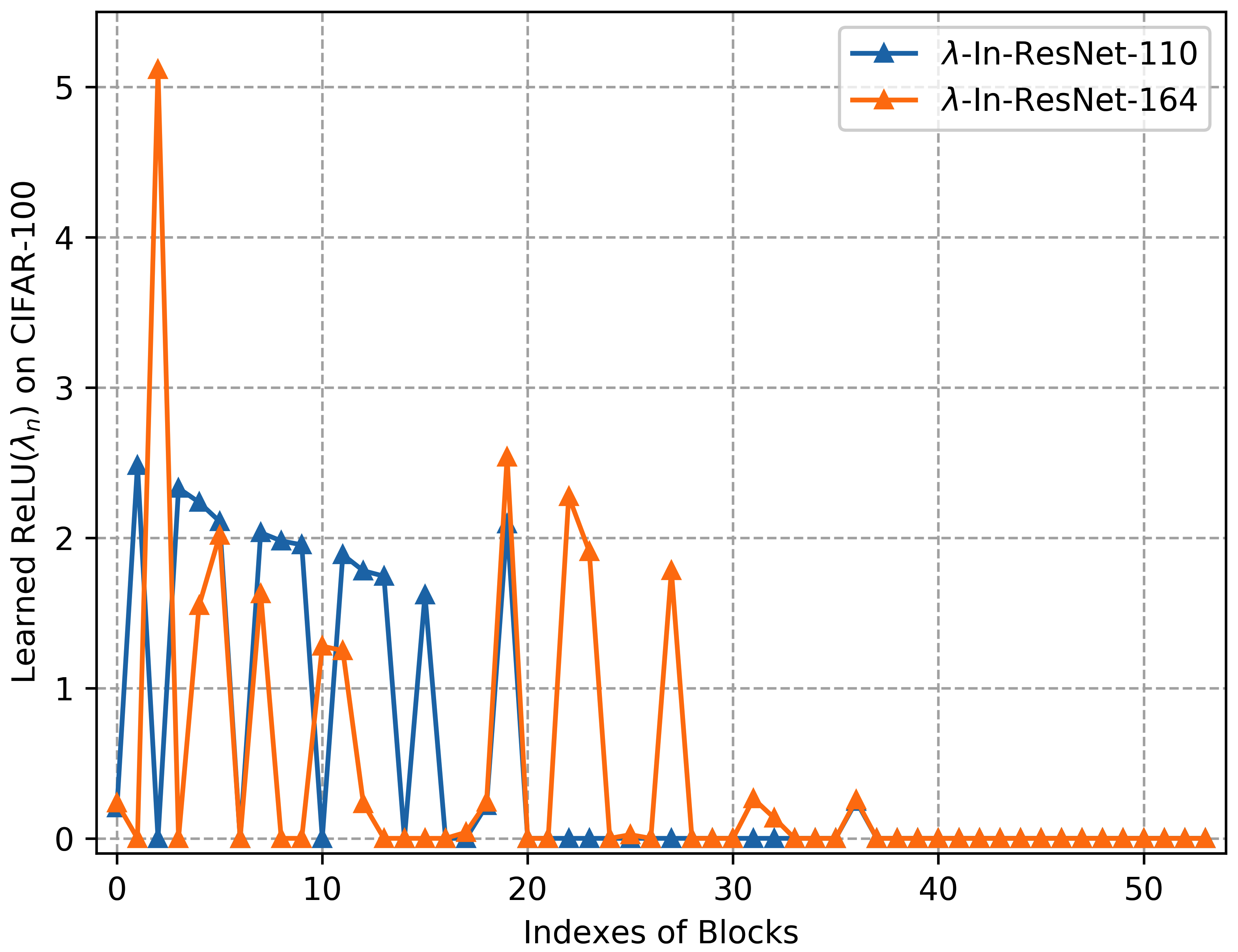}
    \caption{Learned interpolation coefficients in $\lambda$-In-ResNet-110 and $\lambda$-In-ResNet-164 models trained on CIFAR-100 benchmarks. }
    \label{learned_coeffs-100-lamin}
\end{figure}

\newpage

\section{Tradeoff between Optimization and Robustness - Results on CIFAR-100}\label{appendix_b}

\begin{table}[htbp]
    \centering
    \begin{tabular}{l|c|r|r|r|r|r}
        \hline
        Model & Initialization & Acc. & noise & FGSM & IFGSM & PGD \\
        \hline
        \hline
        ResNet & - & \textbf{72.73} & 25.76 & 18.74 & 2.18 & 2.11 \\
        \hline
         & $\mathcal{U}[0.00, 0.10]$ & \textbf{72.53} & 27.07 & \textbf{19.51} & 2.68 & 2.57  \\
         & $\mathcal{U}[0.10, 0.20]$ & 71.02 & 32.24 & 19.30 & 4.60 & 4.38 \\
        In-ResNet &$\mathcal{U}[0.20, 0.25]$ & 70.55 & 34.63 & 18.74 & 4.92 & 4.81 \\        
         &$\mathcal{U}[0.25, 0.30]$ & 69.30 & 37.90 & 18.96 & 6.97 & 6.86 \\
        &$\mathcal{U}[0.30, 0.40]$  & 68.13 & \textbf{39.26} & \textbf{19.47} & \textbf{8.03} & \textbf{7.97} \\
        \hline           
         & $\mathcal{U}[0.00, 0.10]$ & \textbf{72.27} & 27.41 & \textbf{18.94} & 2.69 & 2.58  \\
         & $\mathcal{U}[0.10, 0.20]$ & 71.29 & 31.99 & 18.24 & 4.11 & 3.97 \\
        $\lambda$-In-ResNet &$\mathcal{U}[0.20, 0.25]$ & 70.39 & 34.69 & 18.40 & 5.17 & 5.00 \\        
         &$\mathcal{U}[0.25, 0.30]$ & 68.87 & 37.07 & 18.37 & 6.43 & 6.28 \\
        &$\mathcal{U}[0.30, 0.40]$  & 68.31 & \textbf{38.56} & \textbf{18.75} & \textbf{6.62} & \textbf{6.47} \\
        \hline         
    \end{tabular}
    \caption{Accuracy and robustness results of In-ResNet-110 and $\lambda$-In-ResNet-110 with different initialization schemes. ``Acc." denotes the accuracy over CIFAR-100 testing set. ``noise" denotes the \textbf{average} accuracy of the four stochastic noise groups from CIFAR-100-C. ``FGSM", ``IFGSM", and ``PGD" represent model accuracy under the corresponding attacks at the radius of $4/255$. All of the results reported are averaged over 5 runs except for $\mathcal{U}[0.3, 0.4]$: they are averaged over 3(2) runs, as 2(3) out of 5 runs for In-ResNet-110 ($\lambda$-In-ResNet-110) failed with a final accuracy of 1\% on CIFAR-100 test set.}
    \label{init-results-100}
\end{table}

 In Table \ref{init-results}, we discussed about accuracy and robustness results of In-ResNet-110 and $\lambda$-In-ResNet-110 with different initialization schemes on CIFAR-10. Here we provide similar analysis of In-ResNet-110 and $\lambda$-In-ResNet-110 on CIFAR-100. Table \ref{init-results-100} depicts same phenomenon as original Table 5 does: as the initialization of interpolation coefficients becomes larger, the model gradually becomes non-residual, accuracy drops and robustness rises.

\section{Effect of the Ensemble Method}\label{appendix_c}

Table \ref{ensemble-all} shows the comparison between the accuracy and robustness results of the ensemble model over 5 different runs and those of the single model with different layers and benchmarks. Figure \ref{imp-164-10} and Figure \ref{imp-100} illustrates the accuracy improvements over single models for the ensemble models over different benchmarks. It is shown that the accuracy improvements for the ensemble of our models are mostly more significant than those for the baseline ResNet models.

\begin{table*}[htbp]
    \centering
    \begin{tabular}{l|l|r|r|r|r|r}
        \hline
        Benchmark & Model & Acc. & noise & FGSM & IFGSM & PGD \\
        \hline
        \hline
        & ResNet-110 & 93.58 & 53.70 & 41.48 & 5.93 & 5.60 \\
        & ResNet-110, ens & \textbf{95.03} & 55.70 & 43.99 & 6.26 & 5.93 \\ 
        & In-ResNet-110 & 92.28 & 72.67 & 55.24 & 32.05 & 31.74 \\
        & In-ResNet-110, ens & 94.03 & \textbf{75.86} & \textbf{58.42} & \textbf{34.44} & \textbf{34.03} \\
        & $\lambda$-In-ResNet-110 & 92.15 & 72.35 & 50.84 & 30.72 & 30.45 \\
        CIFAR-10 & $\lambda$-In-ResNet-110, ens & 94.00 & 75.29 & 53.66 & 32.95 & 32.77 \\
        \cline{2-7}
        & ResNet-164 & 94.46 & 56.51 & 44.37 & 8.19 & 7.77 \\
        & ResNet-164, ens & \textbf{95.44} & 58.76 & 46.54 & 8.53 & 8.14 \\ 
        & In-ResNet-164 & 92.69 & 72.05 & 51.84 & 27.43 & 26.95 \\
        & In-ResNet-164, ens & 94.26 & \textbf{75.26} & \textbf{54.72} & \textbf{28.97} & \textbf{28.51} \\
        & $\lambda$-In-ResNet-164 & 92.55 & 71.88 & 50.53 & 26.50 & 26.04 \\
        & $\lambda$-In-ResNet-164, ens & 94.20 & 74.97 & 53.17 & 27.74 & 27.30 \\  
        \hline     
        & ResNet-110 & 72.73 & 25.76 & 18.74 & 2.18 & 2.11 \\
        & ResNet-110, ens & \textbf{78.84} & 30.05 & 21.43 & 2.83 & 2.81 \\ 
        & In-ResNet-110 & 70.55 & 34.63 & 18.74 & 4.92 & 4.81 \\
        & In-ResNet-110, ens & 76.91 & \textbf{40.69} & \textbf{21.73} & \textbf{6.90} & \textbf{6.37} \\ 
        & $\lambda$-In-ResNet-110 & 70.39 & 34.69 & 18.40 & 5.17 & 5.00 \\
        CIFAR-100 & $\lambda$-In-ResNet-110, ens & 76.61 & 40.14 & 20.79 & 6.41 & 6.35 \\    
        \cline{2-7}
        & ResNet-164 & 76.06 & 26.95 & 23.58 & 3.45 & 3.31 \\
        & ResNet-164, ens & \textbf{80.64} & 30.28 & \textbf{27.05} & 4.02 & 3.94 \\ 
        & In-ResNet-164 & 72.94 & 35.12 & 22.30 & 6.59 & 6.34 \\
        & In-ResNet-164, ens & 77.78 & \textbf{39.69} & 24.95 & 7.34 & 7.26 \\ 
        & $\lambda$-In-ResNet-164 & 73.22 & 34.58 & 22.50 & 6.64 & 6.46 \\
        & $\lambda$-In-ResNet-164, ens & 77.73 & 38.86 & 24.82 & \textbf{7.58} & \textbf{7.43} \\
        \hline 
    \end{tabular}
    \caption{Comparison between the accuracy and robustness results of the ensemble model over 5 different runs and those of the single model (scores are averaged) with different layers and benchmarks. ``Acc." denotes the accuracy over CIFAR-10 testing set. ``noise" denotes the \textbf{average} accuracy of the four stochastic noise groups from CIFAR-C. ``FGSM", ``IFGSM", and ``PGD" represent model accuracy under the corresponding attacks at the radius of $4/255$.}
    \label{ensemble-all}
\end{table*}

\begin{figure}[htbp]
    \centering
    \includegraphics[scale=0.07]{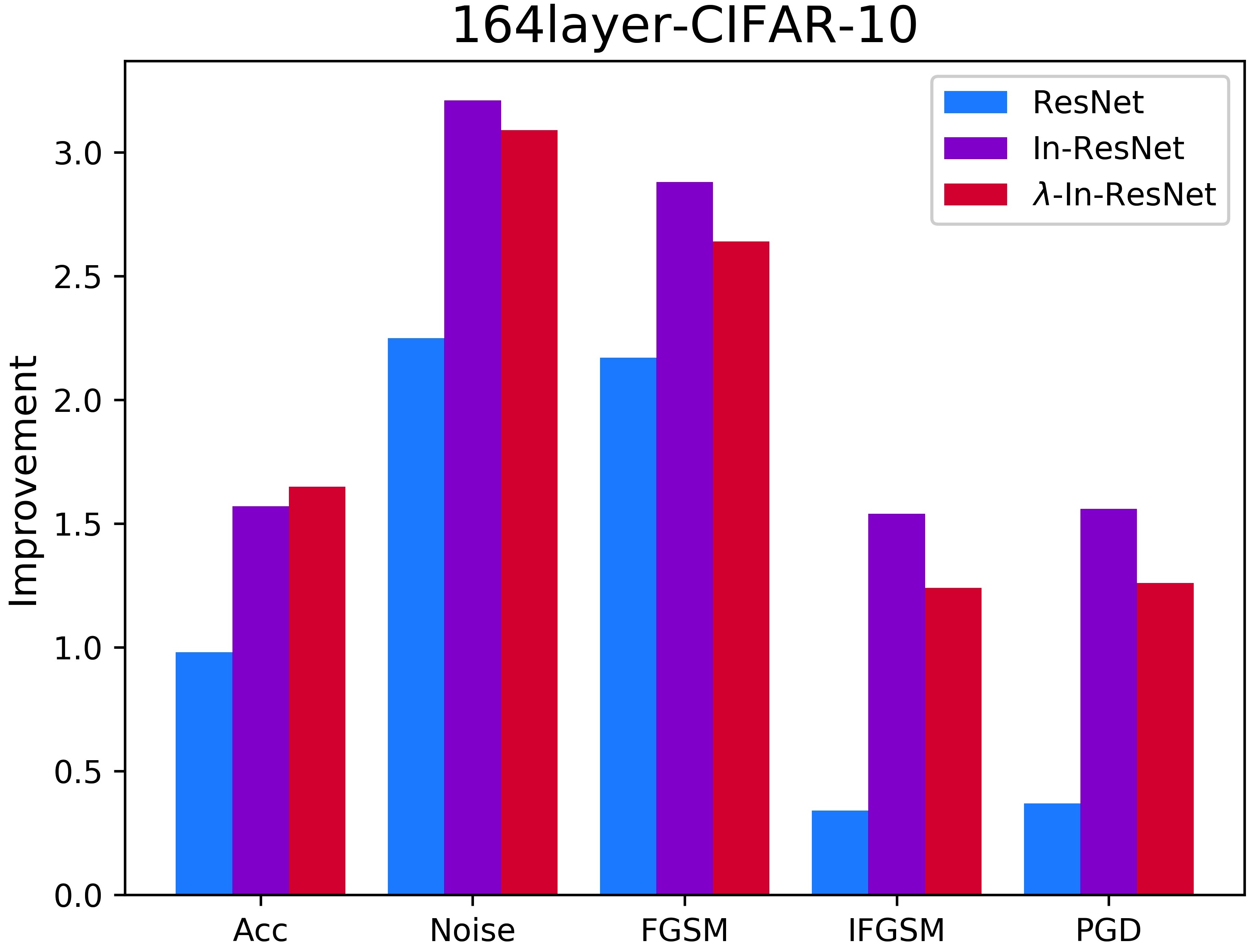}
    \caption{The accuracy \textbf{improvements} over single models for the ensemble ResNet-164, In-ResNet-164 and $\lambda$-In-ResNet-164 over CIFAR-10 dataset. Both of the ensemble of our models have more significant accuracy improvements than the ensemble of the baseline ResNet-164 model.}
    \label{imp-164-10}
\end{figure}

\begin{figure*}[tbp]
\centering
\subfigure[]{
    \includegraphics[scale=0.07]{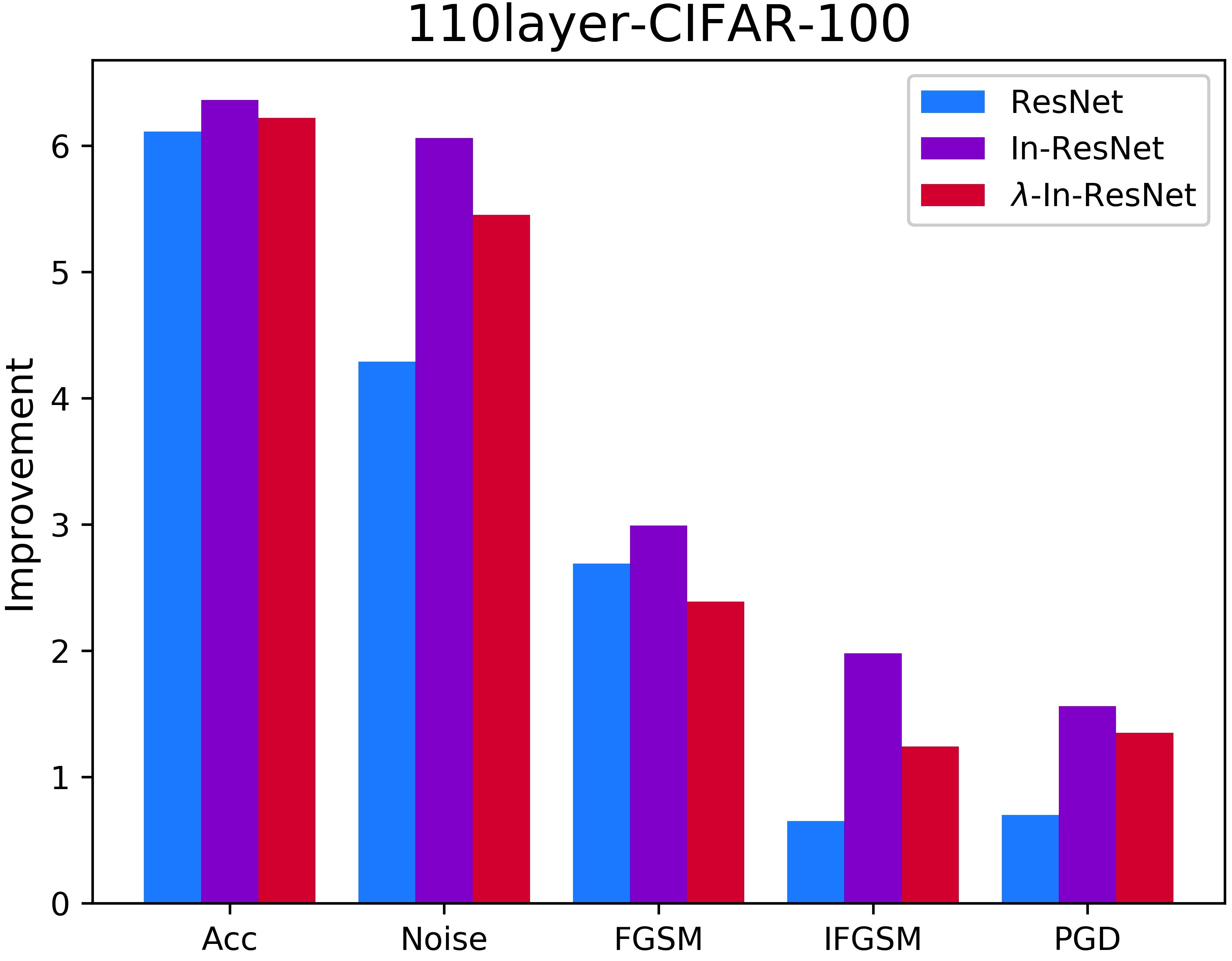}
}
\subfigure[]{
    \includegraphics[scale=0.07]{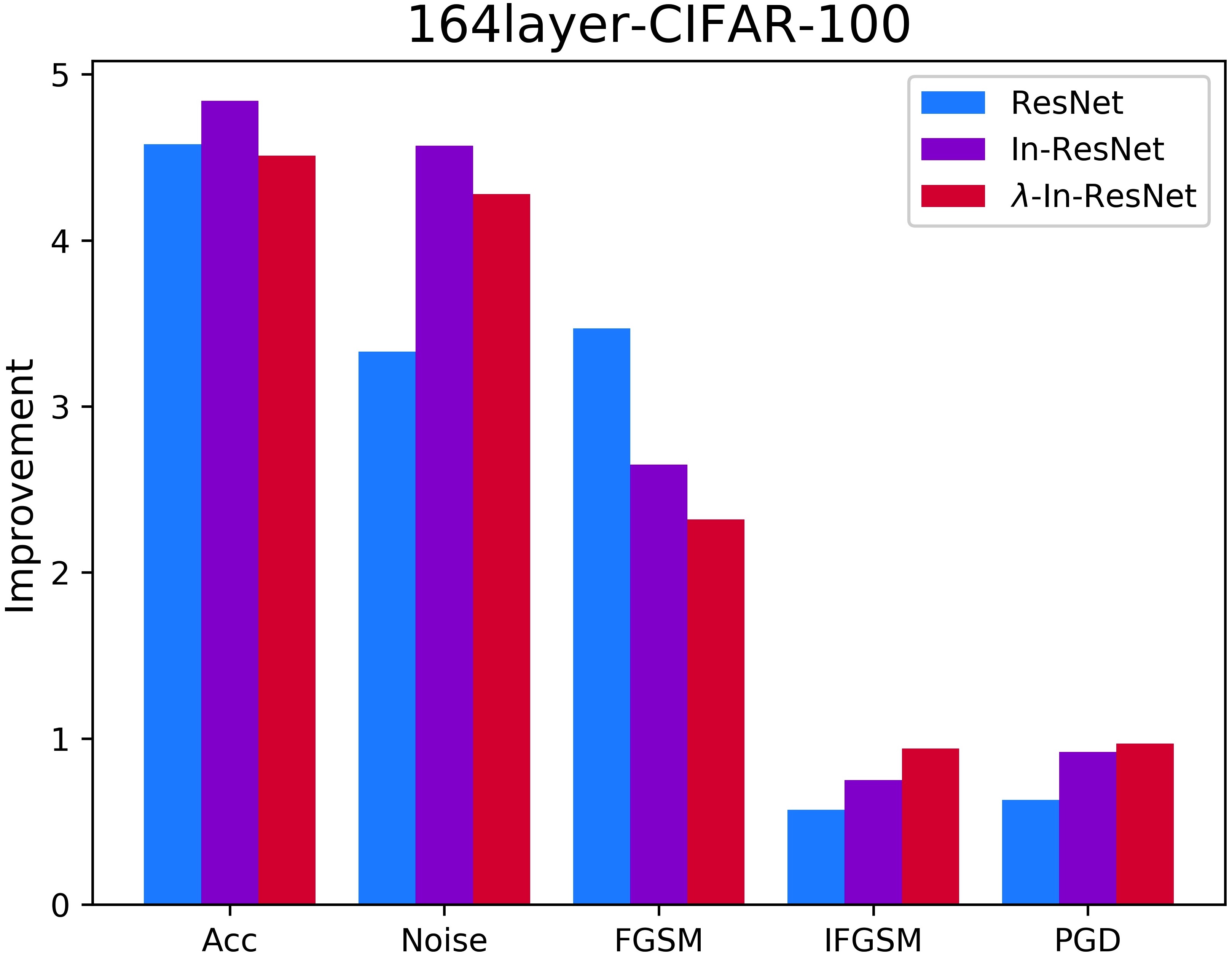}
}

\caption{The accuracy \textbf{improvements} over single models for (a) the ensemble ResNet-110, In-ResNet-110 and $\lambda$-In-ResNet-110; (b) the ensemble ResNet-164, In-ResNet-164 and $\lambda$-In-ResNet-164 over CIFAR-100 dataset. Both of the ensemble of our models have more significant accuracy improvements than the ensemble of the corresponding baseline model.}
\label{imp-100}
\end{figure*}

\newpage

\section{Standard Deviation for Reported Results}\label{appendix_d}

In Table \ref{stochastic-noise}, we reported model accuracy over stochastic noise from CIFAR-10-C and CIFAR-100-C datasets. In Table \ref{acc}, we reported model accuracy over unperturbed CIFAR-10 and CIFAR-100 test sets. In Table \ref{stochastic-adv}, we reported model accuracy over perturbed CIFAR-10 and CIFAR-100 images from FGSM, IFGSM, and PGD adversarial attacks with different attack radii. In Table \ref{init-results} and Table \ref{init-results-100}, we discussed about accuracy and robustness results of In-ResNet-110 and $\lambda$-In-ResNet-110 with different initialization schemes on CIFAR-10 and CIFAR-100.

Here we provide all of the standard deviation of the results about the performance of ResNet, In-ResNet and $\lambda$-In-ResNet. For simplicity, we assume that accuracy results over different stochastic noise groups in CIFAR-C are independent from each other. In general, the standard deviation scores of our models are comparable to those of the baseline ResNet models.

It should be noted that the following standard deviation results should be used together with the original results. While some method has low standard deviation, its averaged performance can be inferior as well. Also noted that some of the standard deviation scores of our models are larger than the baseline models. The large standard deviation scores are attributed to optimization difficulty, and result in more significant difference among different runs of our models. The performance difference also accounts for the fact that the ensemble method is more beneficial to our models. 

\

\begin{table*}[htbp]
    \centering
    \begin{tabular}{c|l||c|c|c|c|c}
    \hline
        Benchmark & Model & Impulse & Speckle & Gaussian & Shot & Avg. \\
    \hline
    \hline
        \multirow{6}{*}{CIFAR-10} & ResNet-110 & 1.451 & 2.356 & 3.003 & 2.570 & 2.412 \\
        ~ & In-ResNet-110 & 2.576 & 3.134 & 4.769 & 3.485 & 3.583 \\
        ~ & $\lambda$-In-ResNet-110 & 2.433 & 2.927 & 4.383 & 3.109 & 3.293 \\
    \cline{2-7}
        ~ & ResNet-164 & 1.262 & 2.058 & 3.136 & 2.443 & 2.325 \\
        ~ & In-ResNet-164 & 1.352 & 3.373 & 5.570 & 3.903 & 3.856 \\
        ~ & $\lambda$-In-ResNet-164 & 1.804 & 1.992 & 3.229 & 2.356 & 2.408 \\
    \hline
        \multirow{6}{*}{CIFAR-100} & ResNet-110 & 1.870 & 1.076 & 0.910 & 1.073 & 1.288 \\
        ~ & In-ResNet-110 & 1.489 & 2.792 & 3.250 & 3.135 & 2.757 \\
        ~ & $\lambda$-In-ResNet-110 & 0.651 & 1.468 & 1.874 & 1.589 & 1.468 \\          
    \cline{2-7}
        ~ & ResNet-164 & 1.376 & 2.051 & 1.717 & 1.817 & 1.757  \\
        ~ & In-ResNet-164 & 0.729 & 1.849 & 1.704 & 1.906 & 1.619 \\
        ~ & $\lambda$-In-ResNet-164 & 0.736 & 1.269 & 1.682 & 1.446 & 1.330 \\          
    \hline
    \end{tabular}
    \caption{Standard deviation of each accuracy results for ResNet, In-ResNet and $\lambda$-In-ResNet reported in Table \ref{stochastic-noise}. For simplicity, results under each types of noise are assumed as independent from each other.}
    \label{stochastic-noise-std}
\end{table*}

\begin{table}[htbp]
    \centering
    \begin{tabular}{l|c|c}
    \hline
        Model & CIFAR-10 & CIFAR-100 \\
    \hline
    \hline
        ResNet-110 & 0.396 & 0.144 \\
        In-ResNet-110 & 0.831 & 0.402 \\
        $\lambda$-In-ResNet-110 & 0.433 & 0.510 \\
    \hline        
        ResNet-164 & 0.368 & 0.224 \\
        In-ResNet-164 & 0.635 & 0.507 \\
        $\lambda$-In-ResNet-164 & 0.598 & 0.279 \\        
    \hline      
    \end{tabular}
    \caption{Standard deviation of each accuracy results for ResNet, In-ResNet and $\lambda$-In-ResNet reported in Table \ref{acc}.}
    \label{acc-std}
\end{table}

\begin{table*}[htbp]
    \centering
    \begin{tabular}{l|l||r|r|r|r|r|r|r|r|r}
    \hline
        \multirow{2}{*}{Benchmark} & \multirow{2}{*}{Model} & \multicolumn{3}{c|}{FGSM} & \multicolumn{3}{|c|}{IFGSM} & \multicolumn{3}{|c}{PGD} \\
    \cline{3-11}
        ~ &  ~ & 2/255 & 4/255 & 8/255 & 2/255 & 4/255 & 8/255 & 2/255 & 4/255 & 8/255 \\
    \hline
    \hline
        \multirow{6}{*}{CIFAR-10} & ResNet-110 & 0.782 & 0.577 & 0.894 & 1.166 & 0.549 & 0.030 & 1.257 & 0.456 & 0.021 \\
        ~ & In-ResNet-110 & 1.841 & 1.905 & 1.349 & 3.886 & 6.536 & 2.667 & 3.953 & 6.613 & 2.737 \\
        ~ & $\lambda$-In-ResNet-110 & 0.942 & 1.304 & 1.215 & 2.087 & 3.875 & 1.330 & 2.118 & 3.934 & 1.330 \\ 
    \cline{2-11}       
        ~ & ResNet-164 & 1.024 & 1.128 & 0.819 & 2.917 & 1.770 & 0.051 & 2.983 & 1.696 & 0.043 \\
        ~ & In-ResNet-164 & 1.192 & 1.450 & 1.058 & 2.220 & 4.284 & 0.983 & 2.279 & 4.296 & 0.988 \\
        ~ & $\lambda$-In-ResNet-164 & 1.633 & 2.066 & 1.662 & 3.095 & 4.731 & 0.867 & 3.200 & 4.779 & 0.876 \\
    \hline      
        \multirow{6}{*}{CIFAR-100} & ResNet-110 & 0.724 & 0.436 & 0.292 & 0.982 & 0.270 & 0.056 & 0.930 & 0.274 & 0.088 \\
        ~ & In-ResNet-110 & 1.593 & 0.840 & 1.796 & 2.904 & 1.124 & 0.150 & 2.976 & 1.157 & 0.143 \\
        ~ & $\lambda$-In-ResNet-110 & 0.851 & 0.515 & 0.488 & 1.339 & 0.645 & 0.145 & 1.373 & 0.610 & 0.143 \\     
    \cline{2-11}       
        ~ & ResNet-164 & 0.671 & 0.372 & 0.958 & 1.479 & 0.505 & 0.081 & 1.523 & 0.524 & 0.072 \\
        ~ & In-ResNet-164 & 1.196 & 0.996 & 0.661 & 1.946 & 1.078 & 0.181 & 1.976 & 1.058 & 0.170 \\
        ~ & $\lambda$-In-ResNet-164 & 0.871 & 0.732 & 0.662 & 1.105 & 0.703 & 0.125 & 1.127 & 0.703 & 0.134 \\
    \hline              
    \end{tabular}
    \caption{Standard deviation of each accuracy results for ResNet, In-ResNet and $\lambda$-In-ResNet reported in Table \ref{stochastic-adv}.}
    \label{stochastic-adv-std}
\end{table*}

\begin{table*}[htbp]
    \centering
    \begin{tabular}{l|r|r|r|r|r}
        \hline
        Model & Acc. & noise & FGSM & IFGSM & PGD \\
        \hline
        \hline
        ResNet-110 & 0.396 & 2.412 & 0.577 & 0.549 & 0.456 \\
        \hline        
        In-ResNet-110 & 0.831 & 3.583 & 1.905 & 6.536 & 6.613 \\
        In-ResNet-sig-110 & 0.157 & 2.042 & 1.004 & 0.775 & 0.813 \\
        In-ResNet-gating-110 & 0.204 & 1.322 & 0.669 & 0.270 & 0.223 \\
        In-ResNet-gating-sig-110 & 2.101 & 7.860 & 6.346 & 13.397 & 13.491 \\
        \hline
    \end{tabular}
    \caption{Standard deviation of each accuracy results reported in Table \ref{comp-results}.}
    \label{comp-results-std}
\end{table*}

\newpage

\begin{table}[htbp]
    \centering
    \begin{tabular}{l|c|r|r|r|r|r}
        \hline
        Model & Initialization & Acc. & noise & FGSM & IFGSM & PGD \\
        \hline
        \hline
        ResNet & - & 0.396 & 2.412 & 0.577 & 0.549 & 0.456 \\
        \hline
         & $\mathcal{U}[0.00, 0.10]$ & 0.208 & 2.375 & 0.336 & 0.999 & 0.911 \\
         & $\mathcal{U}[0.10, 0.20]$ & 0.224 & 3.650 & 1.329 & 4.742 & 4.820 \\
        In-ResNet &$\mathcal{U}[0.20, 0.25]$ & 0.831 & 3.583 & 1.905 & 6.536 & 6.613 \\        
         &$\mathcal{U}[0.25, 0.30]$ & 0.582 & 2.448 & 1.259 & 2.640 & 2.680 \\
        &$\mathcal{U}[0.30, 0.40]$  & 0.328 & 0.763 & 0.434 & 0.931 & 0.920 \\
        \hline           
         & $\mathcal{U}[0.00, 0.10]$ & 0.079 & 1.815 & 0.624 & 1.235 & 1.183 \\
         & $\mathcal{U}[0.10, 0.20]$ & 0.277 & 3.772 & 1.282 & 4.432 & 4.485 \\
        $\lambda$-In-ResNet & $\mathcal{U}[0.20, 0.25]$ & 0.433 & 3.293 & 1.304 & 3.875 & 3.934 \\ 
         &$\mathcal{U}[0.25, 0.30]$ & 0.617 & 3.091 & 1.319 & 3.674 & 3.734 \\
        &$\mathcal{U}[0.30, 0.40]$  & 0.233 & 0.219 & 0.544 & 0.997 & 1.004 \\
        \hline         
    \end{tabular}
    \caption{Standard deviation of each accuracy results reported in Table \ref{init-results}. Note that for $\mathcal{U}[0.3, 0.4]$, the reported S.D. results are calculated only over 4(2) successful runs.}
    \label{init-results-std}
\end{table}

\begin{table}[htbp]
    \centering
    \begin{tabular}{l|c|r|r|r|r|r}
        \hline
        Model & Initialization & Acc. & noise & FGSM & IFGSM & PGD \\
        \hline
        \hline
        ResNet & - & 0.144 & 1.288 & 0.436 & 0.270 & 0.274 \\
        \hline
         & $\mathcal{U}[0.00, 0.10]$ & 0.436 & 1.489 & 0.349 & 0.347 & 0.287 \\
         & $\mathcal{U}[0.10, 0.20]$ & 0.640 & 1.622 & 0.652 & 0.914 & 0.874 \\
        In-ResNet &$\mathcal{U}[0.20, 0.25]$ & 0.402 & 2.757 & 0.840 & 1.124 & 1.157 \\        
         &$\mathcal{U}[0.25, 0.30]$ & 0.836 & 2.053 & 1.174 & 1.348 & 1.399 \\
        &$\mathcal{U}[0.30, 0.40]$  & 1.342 & 2.725 & 1.237 & 2.025 & 1.921 \\
        \hline           
         & $\mathcal{U}[0.00, 0.10]$ & 0.308 & 1.565 & 0.341 & 0.603 & 0.579 \\
         & $\mathcal{U}[0.10, 0.20]$ & 0.573 & 2.228 & 0.422 & 0.894 & 0.903 \\
        $\lambda$-In-ResNet & $\mathcal{U}[0.20, 0.25]$ & 0.510 & 1.468 & 0.515 & 0.645 & 0.610 \\ 
         & $\mathcal{U}[0.25, 0.30]$ & 0.555 & 2.082 & 1.413 & 1.069 & 1.056 \\
        & $\mathcal{U}[0.30, 0.40]$  & 0.368 & 1.767 & 0.382 & 1.329 & 1.315 \\
        \hline         
    \end{tabular}
    \caption{Standard deviation of each accuracy results reported in Table \ref{init-results-100}. Note that for $\mathcal{U}[0.3, 0.4]$, the reported S.D. results are calculated only over 3(2) successful runs.}
    \label{init-results-100-std}
\end{table}

\end{document}